\newtheorem{claim}{Claim}
\newcommand{\ours}{\textsf{PRO}\xspace}
\def\1{\bm{1}}
\def\rvx{{\mathbf{x}}}
\def\rvz{{\mathbf{z}}}
\def\vy{{\bm{y}}}
\DeclareMathAlphabet{\mathsfit}{\encodingdefault}{\sfdefault}{m}{sl}
\SetMathAlphabet{\mathsfit}{bold}{\encodingdefault}{\sfdefault}{bx}{n}
\def\gS{{\mathcal{S}}}
\def\sR{{\mathbb{R}}}
\definecolor{cvprblue}{rgb}{0.21,0.49,0.74}
\definecolor{lightcarminepink}{rgb}{0.9, 0.4, 0.38}
\title{Leveraging Perturbation Robustness to Enhance Out-of-Distribution Detection}
\author{
Wenxi Chen\textsuperscript{1}, Raymond A. Yeh$^{2,\dagger}$, Shaoshuai Mou\textsuperscript{3}, Yan Gu$^{1,\dagger}$ \\
\textsuperscript{1}School of ME, \textsuperscript{2}Department of CS,
\textsuperscript{3}School of AAE, \\
Purdue University \\
{\tt\small \{chen4803, rayyeh, mous, yangu\}@purdue.edu}
}
\begin{document}
\maketitle

\begin{abstract}
Out-of-distribution (OOD) detection is the task of identifying inputs that deviate from the training data distribution. This capability is essential for safely deploying deep computer vision models in open-world environments. In this work, we propose a post-hoc method, Perturbation-Rectified OOD detection (\ours), based on the insight that prediction confidence for OOD inputs is more susceptible to reduction under perturbation than in-distribution (IND) inputs. Based on the observation, we propose an adversarial score function that searches for the local minimum scores near the original inputs by applying gradient descent. This procedure enhances the separability between IND and OOD samples. Importantly, the approach improves OOD detection performance without complex modifications to the underlying model architectures. We conduct extensive experiments using the OpenOOD benchmark~\cite{yang2022openood}. 
Our approach further pushes the limit of softmax-based OOD detection and is the leading post-hoc method for small-scale models. On a CIFAR-10 model with adversarial training,
\ours effectively detects near-OOD inputs, achieving a reduction of more than 10\% on FPR@95 compared to state-of-the-art methods.\footnote{Our code is available at~ \href{https://github.com/wenxichen2746/Perturbation-Rectified-OOD-Detection}{https://github.com/wenxichen2746/Perturbation-Rectified-OOD-Detection}. 
$^\dagger$indicates co-senior authorship.
}

\end{abstract}

\begin{figure}[t]
  \vspace{-0.6cm}
  \includegraphics[width=0.47\textwidth]{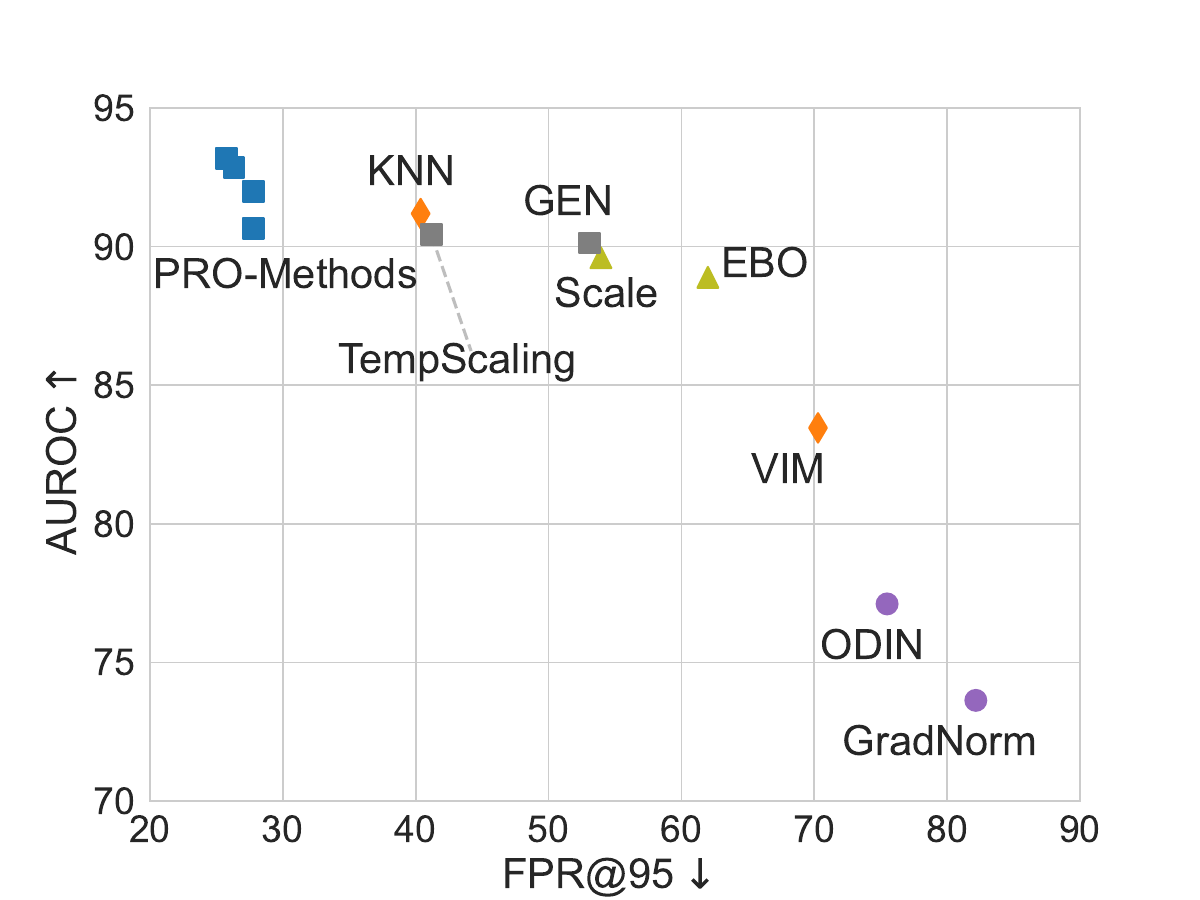}
  \centering
  \vspace{-0.2cm}
  \caption{Near-OOD detection performance tested on CIFAR-10 robust model \cite{diffenderfer2021winning}. Near-OOD includes CIFAR-100 \cite{krizhevsky2009cifar} and Tiny-ImageNet \cite{le2015tiny}. Different markers distinguish the following baseline categories: feature-based methods, such as VIM \cite{wang2022vim} and KNN \cite{sun2022out} ($\lozenge$); energy~\cite{liu2020energy} and activation modification methods, such as Scale \cite{xu2024scaling} ($\triangle$); gradient-based methods, such as ODIN \cite{liang2018enhancing} and GradNorm \cite{huang2021importance} ($\circ$); and softmax-based scores ($\square$). We apply \ours on MSP, Entropy \cite{hendrycks2016baseline}, Temperature Scaling \cite{guo2017calibration}, and GEN \cite{liu2023gen} forming four \ours methods. Notably, the proposed \ours preprocessing significantly enhances the performance of softmax scores in distinguishing challenging near-OOD data. 
 }
  \label{fig:teaser}
    \vspace{-0.3cm}
\end{figure}
\section{Introduction}
Deploying deep learning models in open-world environments presents the challenge of handling inputs that deviate from the training data. Out-of-distribution (OOD) inputs, which differ significantly from training data, often lead to incorrect predictions.
This occurs because a trained neural network cannot reliably classify inputs from unseen categories. OOD detection aims to identify such anomalous inputs, allowing fallback solutions such as human intervention~\cite{yang2024generalized}.
In-distribution (IND) data may also be affected by noise, sensor malfunctions, or adversarial attacks~\cite{croce2021robustbench}.
To address these challenges,
ongoing research focuses on improving OOD detection methods and enhancing model robustness.
Furthermore, prior studies have established connections between OOD detection and adversarial robustness~\cite{lee2018simple,bitterwolf2020certifiably,karunanayake2024out,meinke2022provably,chen2021atom}. \cite{lee2018simple} proposed a framework for detecting both OOD samples and adversarial attacks. \cite{bitterwolf2020certifiably,meinke2022provably} demonstrate that adversarial attacks can manipulate OOD samples to mislead OOD detectors. In this work, we introduce a novel OOD detection approach leveraging the robustness strength of adversarially pre-trained models.

\begin{figure*}[ht]
    \centering
    \begin{subfigure}[b]{0.53\textwidth}
        \centering
        \includegraphics[width=\textwidth]{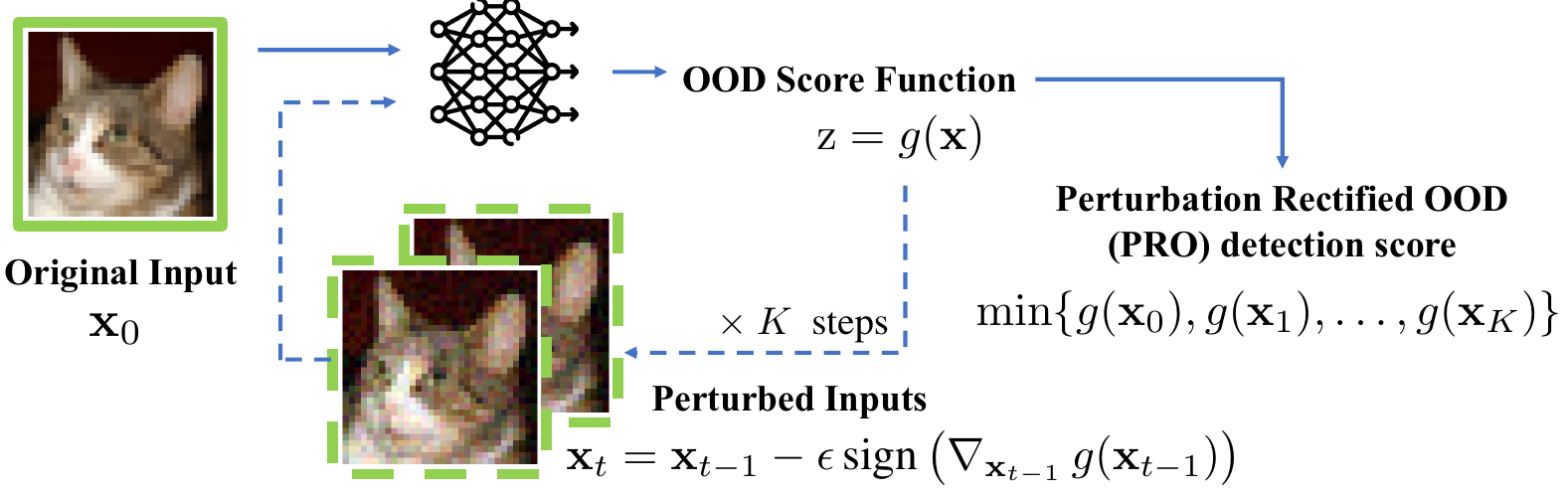} 
        \caption{Algorithm pipeline}
        \label{fig: pipelineplot1}
    \end{subfigure}
    \hspace{0.2cm}
    \begin{subfigure}[b]{0.44\textwidth}
        \centering
        \includegraphics[width=\textwidth]{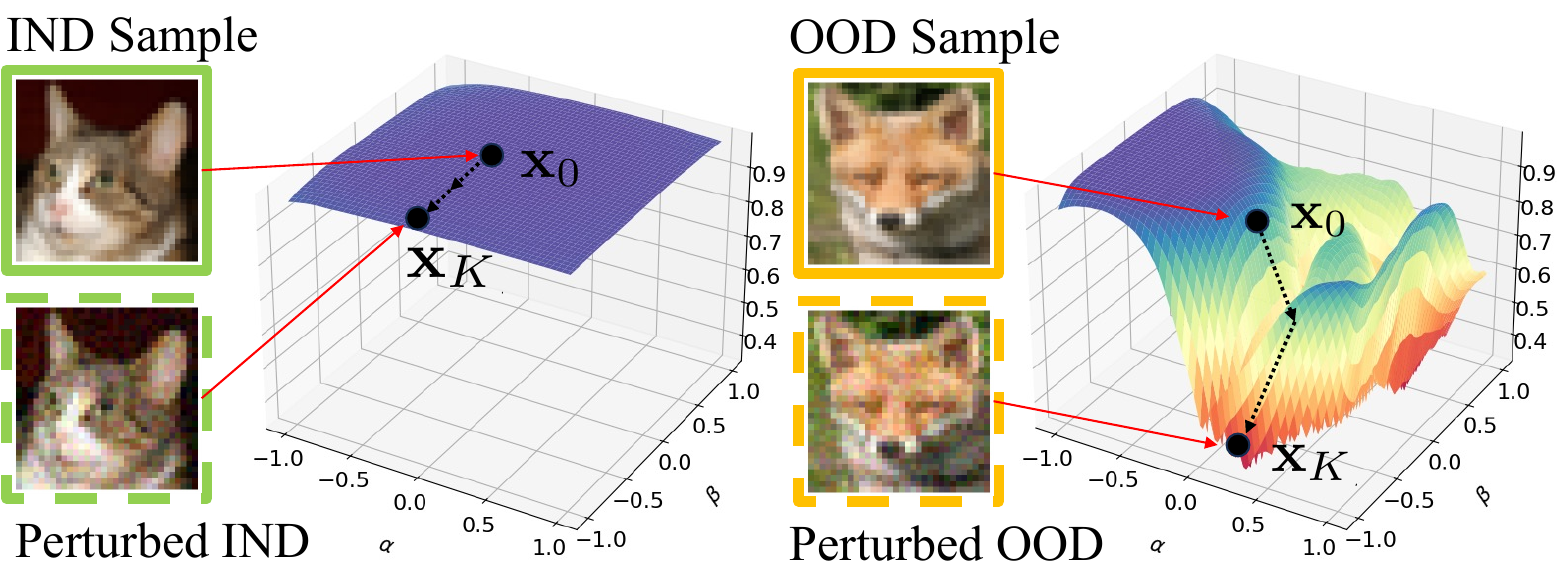}   
        \caption{MSP score for an OOD input is minimized by perturbation.}
        \label{fig: pipelineplot2}
    \end{subfigure}
    \vspace{-0.2cm}
    \caption{Algorithm overview for the proposed Perturbation Rectified OOD (PRO) detection. (a) We conduct multi-step projected gradient descent on the input image during inference to minimize the OOD detection score function. Since the score for OOD data is expected to be more vulnerable to shifts under perturbations than IND data, this process enhances the separability between IND and OOD scores. (b) MSP score landscapes for two IND and OOD samples visualized by random projection~\cite{li2018visualizing}, more examples are provided in \figref{figure:contoursplots}.}
    \vspace{-0.3cm}
    \label{fig:main}
\end{figure*}

Various OOD detection methods for image classification have emerged since the baseline method of Maximum Softmax Probability (MSP) was introduced~\cite{hendrycks2016baseline}. One line of research involves using gradient information for data preprocessing, such as ODIN~\cite{liang2018enhancing}, G-ODIN~\cite{hsu2020generalized}, and MDS with preprocessing~\cite{lee2018simple}. These works apply gradient-based perturbations to inputs to enhance prediction confidence. ODIN shows empirical differences in gradient expectations between IND and OOD data. However, evaluations from the OpenOOD benchmark~\cite{yang2022openood} reveal that ODIN reduces MSP performance across various tasks. This reduction happens because ODIN’s preprocessing tends to increase false confidence for near-OOD data, while keeping high IND confidence unaltered. This limits its ability to capture gradient differences in perturbed confidence scores.

{\bf\noindent Motivation.} Unlike previous gradient-based methods, our work builds on the observation that OOD confidence scores are more susceptible to reductions under perturbation than IND scores. We refer to this difference in sensitivity to perturbations between IND and OOD inputs as the perturbation robustness difference. Conceptually, robustness here is related to the Lipschitz constant that describes the flatness of the score function in input space. Under the same perturbation bound, OOD scores experience greater attenuation than IND scores, making them more separable. This insight suggests that adversarially robust models may be used to enhance OOD detection accuracy. Thus, we introduce a new post-hoc OOD method that leverages the model robustness towards corrupted IND inputs.

{\bf\noindent Our method.}
We propose Perturbation Rectified OOD detection (\ours) that can be incorporated with softmax-probability-based OOD detection methods to improve performance. 
By applying perturbations as a preprocessing step, \ours significantly lowers the confidence scores for OOD inputs relative to IND inputs, thereby increasing the separability between IND and OOD scores.

We evaluate \ours using the comprehensive OpenOOD benchmark~\cite{yang2022openood} across various pre-trained robust DNN backbones on CIFAR \cite{krizhevsky2009cifar} and ImageNet \cite{deng2009imagenet}. Additionally, we test leading robust models from RobustBench \cite{croce2021robustbench} to examine the synergy between OOD detection and adversarial/corruption robustness--two complementary areas critical for the safe deployment of deep learning models.
On small-scale models including CIFAR-10 and CIFAR-100~\cite{krizhevsky2009cifar}, \ours achieves leading OOD detection accuracy compared to %
existing state-of-the-art methods from benchmark~\cite{yang2022openood} %
which include IND-feature-based methods, such as, VIM~\cite{wang2022vim} and KNN~\cite{sun2022out}, activation modification methods such as Activation Shaping (ASH)~\cite{djurisic2023extremely}, and Scale~\cite{xu2024scaling}.
Furthermore, \ours works effectively in distinguishing near-OOD data, which is a substantially more challenging setting~\cite{zhang2023openood}. %
Shown in~\figref{fig:teaser}, \ours achieves top performance in near-OOD detection, excelling in both AUROC and FPR@95 metrics.
{\noindent\bf Our contributions are as follows:}
\begin{itemize}
\item We propose an
adversarial score function for OOD detection, based on the observation that IND confidence scores are more robust to perturbations than OOD inputs. See~\figref{fig:main} for an overview. We provide analysis and empirical validation of the observation. 
\item We leverage adversarial robustness to improve OOD detection. We evaluate the impact of adversarial training on OOD detection performance by utilizing the two most comprehensive benchmarks, OpenOOD and RobustBench. This establishes a new link between these two safety-critical areas in deep learning.
\item 
We demonstrate the effectiveness of the proposed \ours method as a simple, post-hoc enhancement to representative softmax scores. We perform extensive validation experiments on CIFAR-10, CIFAR-100, and ImageNet conducting a comprehensive comparison with various categories of baseline methods.
\end{itemize}

\section{Related Work}

Studies on OOD detection address several safety-critical areas in deep learning, including anomaly detection, open set recognition, and semantic and covariate domain shift detection~\cite{yang2024generalized}. Existing approaches generally involve either training modifications or post-hoc analysis.
our review of existing methods focuses primarily on those evaluated in the OpenOOD benchmark \cite{yang2022openood}\cite{zhang2023openood}, a comprehensive platform that examines various model architectures and datasets, including CIFAR~\cite{krizhevsky2009cifar} and ImageNet~\cite{deng2009imagenet}.

{\bf\noindent Training-modification methods.} These techniques require additional training protocols or data for OOD detection. Experiments from benchmark \cite{zhang2023openood,croce2021robustbench} demonstrate that data augmentation methods, such as PixMix~\cite{hendrycks2022pixmix}, AugMix~\cite{hendrycks*2020augmix} and RegMixup~\cite{pinto2022using}, are beneficial for both OOD detection and adversarial robustness. 

{\bf\noindent Representative Post-hoc methods.} These methods aim to enhance OOD detection without modifying pre-trained models. One category leverages features from IND data, as demonstrated by VIM~\cite{wang2022vim} and KNN~\cite{sun2022out}, which achieve highly competitive results. Recently, approaches like Scale~\cite{xu2024scaling}, ReAct~\cite{sun2021react}, and Ash~\cite{djurisic2023extremely} have employed modifications to neural network activations to enhance energy-based scores.

{\bf\noindent Softmax-based scores.} Beyond the classic MSP baseline, prediction entropy calculated from softmax probabilities is also regarded as a universal baseline for OOD detection~\cite{hendrycks2016baseline}. Temperature scaling~\cite{sun2022out} provides a straightforward approach to calibrating model uncertainty by scaling output logits. Recently, ~\citet{liu2023gen} introduced Generalized Entropy (GEN), demonstrating the most promising results among softmax-based scores.

{\bf\noindent Gradient-based methods.}  ODIN~\cite{liang2018enhancing}, MDS~\cite{lee2018simple}, and G-ODIN~\cite{hsu2020generalized} apply gradient-based perturbations as a preprocessing step before inference to improve OOD detection performance. GradNorm~\cite{huang2021importance} and Approximate-mass~\cite{grathwohl2019your} leverage the gradient norm directly to define an OOD detection score. These approaches share a common intuition that the landscape of score function differs between IND and OOD inputs. 

\section{Preliminaries}
\label{sec:prelim}

{\bf\noindent OOD detection for image classification.}
This study addresses OOD detection for image classification. Formally, an image classifier $f$ takes an image $\rvx$ as input and outputs the unnormalized 
$\hat{\vy} \in \sR^{C}$ across $C$ classes. These classifiers are typically trained by minimizing the cross-entropy loss. During training, it is assumed that the images $\rvx$ are drawn from an \textit{in-distribution (IND)}, denoted $P_{\text{IND}}(\rvx)$. However, during open-world testing, input data may not follow $P_{\text{IND}}(\rvx)$. We refer to this alternative distribution as $P_{\text{OOD}}(\rvx)$, representing \textit{out-of-distribution (OOD)}. The goal of ODD detection is to determine whether an image $\rvx$ is sampled from the IND distribution or not.  
\paragraph{OOD detector.} The task of OOD detection is typically framed as a one-class classification problem, where the model is trained solely on IND data without exposure to OOD examples. This is usually implemented by defining an OOD score function $g(\rvx) \in \sR$, which is then thresholded to classify an input $\rvx$ as IND or OOD.
Specifically, if $g(\mathbf{x})> \tau$, the input is classified as IND; otherwise, it is considered OOD.
A classic choice for the OOD detection score is the \textit{Maximum Softmax Probability (MSP)}
\bea
\label{equation: MSP definition}
    g_{\tt MSP}(\mathbf{x}) \triangleq \max_{y \in \{1, \dots, C\}} \frac{e^{f_y(\mathbf{x}) / T}}{\sum_{y'=1}^{C} e^{f_{y'}(\mathbf{x}) / T}}.
\eea Intuitively, MSP reflects the model's prediction confidence. The higher the confidence, the more likely the input is IND data. The temperature $T$ calibrates this confidence, reducing overconfidence when 
$T$ exceeds 1.  %
    
{\bf\noindent OOD detection metrics.} 
The primary performance metrics for evaluating OOD detectors include: (a) Area Under the Receiver Operating Characteristic Curve, denoted as AUROC, and (b) False Positive Rate at a given value $q\%$ of the True Positive Rate, denoted as FPR@$q$. A common choice is FPR@95.

\section{Approach}
In this section, we introduce the proposed \ours approach for OOD detection. Building on the framework 
reviewed in~\secref{sec:prelim}, our OOD detector also relies on a detection score derived from a pre-trained neural network. However, our method includes three key innovations. First, we introduce an ``adversarial score" to enhance an established detection score $g$ in the literature. Second, we advocate for using a pre-trained model that has been trained to be robust against adversarial attacks. Finally, we provide an analysis of the proposed detector score.

\subsection{Perturbation Rectified OOD (\ours) detection}
\label{sec:score_function}

{\noindent\bf Observation.} Our proposed \ours detector is based on the observation that a score function $g$ is more robust to local additive perturbation, within an $\epsilon$, for IND data than OOD data. More formally, we can state the above observation as an inequality in expectations that 
\bea\label{eq:noise_robost}
\mathbb{E}_{\rvx\sim{P_{\text{OOD}}}(\rvx)}[\Delta \rvz (g,\rvx)] > \mathbb{E}_{\rvx\sim{P_{\text{IND}}}(\rvx)}[\Delta \rvz (g,\rvx)],
\eea
where we define the maximum change within an $\epsilon$ of the score function $g$ as
\bea
\label{equation: dz}
\Delta \rvz (g,\rvx)= \max_{ \|\delta\|_\infty \leq \epsilon}| g(\rvx)-g(\rvx+\delta)|.
\eea

{\noindent\bf Adversarial score function.} Based on the observation in~\equref{eq:noise_robost}, we propose an adversarial score function $g^\star$ that improves upon a given existing score function $g$. This adversarial score function computes the minimum $g$ value by considering all possible perturbation $\delta$ with norm less than $\epsilon$, \ie,
\begin{equation}\label{eq:g_star}
g^\star(\rvx)= \min_{\|\delta\|_\infty \leq \epsilon} g(\rvx+\delta).
\end{equation}
To provide some intuition, %
consider a best-case scenario where IND scores are not affected by perturbation, that is, $P_{\text{IND}}(g^*(\rvx))=P_{\text{IND}}(g(\rvx))$, and OOD scores expectation has been attenuated: $\mathbb{E}_{P_{\text{ood}}}[g^*(x)]<\mathbb{E}_{P_{\text{ood}}}[g(x)]$. 
In this case, the proposed $g^\star$ will be no worse than using the given detector score $g$.

\begin{figure}[t]
\vspace{-0.3cm}
\begin{minipage}{\linewidth}
\begin{algorithm}[H]
	\caption{Solving for $g^\star(\rvx)$}
        \label{alg:g}
	\begin{algorithmic}[1]
        \STATE {\bf Input:} \textit{Step length} $\epsilon$ and \textit{step number} $K$
            \STATE \textbf{Initialize} \textit{Score record} $\gS=\{\}$
		\FOR {$t=0,1,\ldots , K$}

            \STATE Run OOD detection inference $\rvz=g(\rvx_t)$
        \STATE $\gS \leftarrow \gS \cup \{\rvz\}$
            \STATE Calculate $\quad \delta= - \epsilon \ \text{sign} \left( \nabla_{{\rvx}_{t} }\, g({\rvx}_{t}) \right)$
            \STATE Apply perturbation $\rvx_{t+1}=\rvx_t+\delta$
		\ENDFOR
        \RETURN $\min \gS$
	\end{algorithmic} 
\end{algorithm}
\end{minipage}
  \vspace{-0.3cm}
\end{figure}

{\noindent\bf Solving for the adversarial score $g^\star$.} As $g$ involves a neural network, we solve~\equref{eq:g_star} using the fast gradient sign method \cite{kurakin2017adversarial}. Given an input image $\rvx_0$, we iteratively update the image by
\begin{equation}
{\rvx}_{t} =  {\rvx}_{t-1} - \epsilon \, \text{sign} \left( \nabla_{{\rvx}_{t-1} }\, g({\rvx}_{t-1}) \right).
\end{equation}
Note that as this update does not strictly decrease $g$ at each step, we further compute the minimum across all the intermediate images,~\ie,
\bea
g^*(\rvx) \approx \min \{g(\rvx_0), g(\rvx_1), \dots, g(\rvx_K) \}.
\eea
The complete algorithm is provided in Alg.~\ref{alg:g}.

\subsection{Adversarial robustness for OOD detection}
\label{Section: Adversarially robustness for OOD detection}
From our observation in~\equref{eq:noise_robost}, we further hypothesize that using an adversarially trained neural network will benefit \ours detectors. The hypothesis is based on the finding that adversarially robust networks encourage bounded $\Delta\rvz$ of IND data, which we now discuss formally.

\begin{claim}
\label{Claim: aaloss->msp bound}
Consider a model that is trained following the adversarial robustness formulation~\cite{madry2018towards,karunanayake2024out} to have bounded training loss for IND inputs, with $y$ as true label:
\bea
    \mathbb{E}_{\rvx \sim P_{\text{IND}}(\rvx)} \left[ \max_{\|\delta\|_p < \epsilon} \mathcal{L}_{CE}(f(\mathbf{\rvx}+\delta), y) \right] < \mathcal{E},
\eea
then softmax-based OOD scores, such as MSP, have a lower bound for IND inputs. 
\end{claim}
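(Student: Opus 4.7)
The plan is to exploit the identity that cross-entropy loss at the true label equals the negative log of the softmax probability assigned to that label, and then translate the adversarial loss bound into an MSP lower bound via Jensen's inequality. Writing $p_y(\rvx)$ for the softmax probability on the correct class, we have $\mathcal{L}_{CE}(f(\rvx), y) = -\log p_y(\rvx)$, and by the definition of MSP, $g_{\tt MSP}(\rvx) = \max_{y'} p_{y'}(\rvx) \geq p_y(\rvx)$. So any upper bound on the adversarial cross-entropy immediately yields an upper bound on $-\log g_{\tt MSP}$, i.e., a lower bound on MSP.

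Concretely, first I would observe that for every admissible $\delta$ with $\|\delta\|_p < \epsilon$,
\begin{equation*}
-\log p_y(\rvx+\delta) \;\leq\; \max_{\|\delta'\|_p<\epsilon} \mathcal{L}_{CE}(f(\rvx+\delta'), y).
\end{equation*}
Taking expectation over $P_{\text{IND}}$ and invoking the training hypothesis gives $\mathbb{E}[-\log p_y(\rvx+\delta)] < \mathcal{E}$ uniformly in $\delta$. Applying Jensen's inequality to the convex function $-\log$ then yields $-\log \mathbb{E}[p_y(\rvx+\delta)] \leq \mathbb{E}[-\log p_y(\rvx+\delta)] < \mathcal{E}$, hence $\mathbb{E}[p_y(\rvx+\delta)] > e^{-\mathcal{E}}$. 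Since MSP dominates $p_y$ pointwise, $\mathbb{E}[g_{\tt MSP}(\rvx+\delta)] > e^{-\mathcal{E}}$ for any $\delta$ in the $\epsilon$-ball. This extends immediately to the adversarial score $g^\star(\rvx) = \min_{\|\delta\|_\infty \leq \epsilon} g_{\tt MSP}(\rvx+\delta)$ used by \ours: because the loss bound holds uniformly over the entire ball, it also controls $-\log g_{\tt MSP}$ at the MSP-minimizing $\delta^\star$, yielding $\mathbb{E}[g^\star(\rvx)] > e^{-\mathcal{E}}$ on IND inputs.

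The main obstacle is the mismatch between the adversary's objective and the score of interest: the training hypothesis bounds the loss at the argmax of $\mathcal{L}_{CE}$, which is the perturbation that minimizes the true-class probability $p_y$, whereas we ultimately care about the perturbation that minimizes MSP, and these two optima need not coincide. The resolution rests on two monotonicity facts used above, namely that the loss bound is uniform over the whole $\epsilon$-ball and that $-\log g_{\tt MSP}(\rvx+\delta) \leq -\log p_y(\rvx+\delta)$ pointwise, so the loss upper bound transfers to $-\log g_{\tt MSP}$ at every $\delta$, including the MSP-minimizing one. A secondary subtlety is whether Jensen's step is needed at all: if the hypothesis is read pointwise (bounded worst-case loss per sample), the MSP lower bound is pointwise and Jensen is unnecessary; in the expectation form stated in the claim, Jensen is essential to commute expectation with $-\log$. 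For other softmax-based scores such as temperature-scaled MSP, entropy, or GEN, the argument generalizes once the score is expressed as a suitably monotone or concave function of the softmax vector and one invokes the appropriate inequality.
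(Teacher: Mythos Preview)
Your argument is correct and follows essentially the same route as the paper: identify $\mathcal{L}_{CE}=-\log p_y$, push the adversarial loss bound through to $-\log$ of the softmax score, and apply Jensen's inequality to pass to the expectation of the score itself. The one refinement over the paper is that you use the pointwise inequality $g_{\tt MSP}\ge p_y$ rather than assuming $p_{\text{max}}=p_y$ (i.e., correct classification under perturbation), which makes the bound hold without that extra assumption; otherwise the two proofs coincide.
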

\begin{proof}
The cross-entropy loss is equivalent to the negative log-likelihood for a given one-hot ground truth label $y$. For a trained classifier, assuming the MSP score $p_{\text{max}}$ is the probability for true label, we have:
\bea
\label{equation: lowerboundlogMSP}
    \mathbb{E}_{\rvx \sim P_{\text{IND}}(\rvx)} \left[ \max_{\|\delta\|_p < \epsilon} \left(-\log p_{\text{max}}(f(\mathbf{\rvx}+\delta))\right) \right] < \mathcal{E}\\
    \implies \mathbb{E}_{\rvx \sim P_{\text{IND}}(\rvx)} \left[ \min_{\|\delta\|_p < \epsilon} \log p_{\text{max}}(f(\mathbf{\rvx}+\delta)) \right] > -\mathcal{E}.
\eea
To establish a lower bound for MSP scores under perturbation, we leverage the convexity of the exponential function and apply Jensen's inequality:
\begin{equation}
\label{equation: lowerboundMSP}
    \mathbb{E}_{\rvx\sim P_{\text{IND}}(\rvx)} \left[ \min_{\|\delta\|_p < \epsilon} p_{\text{max}}(f(\rvx+\delta)) \right] > \exp(-\mathcal{E}).
\end{equation}

In other words, a bounded adversarial training loss leads to a lower bound for the perturbed  MSP score. Similar derivation can be extended to other softmax-based scores.
We also provide the derivation for bounding prediction entropy in the appendix.
\end{proof}

Since OOD data is not encountered during model training, the model is not encouraged to be robust to such data.
In other words, OOD scores under perturbation will likely be affected by the introduced perturbation in $g^\star$. In the experiment section, we empirically examine this behavior by visualizing the empirical distribution of $g(\rvx+\delta) - g(\rvx)$ for IND and OOD input, as shown in~\figref{figure: dz-steplength}.
This visualization confirms the validity of~\equref{eq:noise_robost}.

\begin{figure}[t]
  \includegraphics[width=0.48\textwidth]{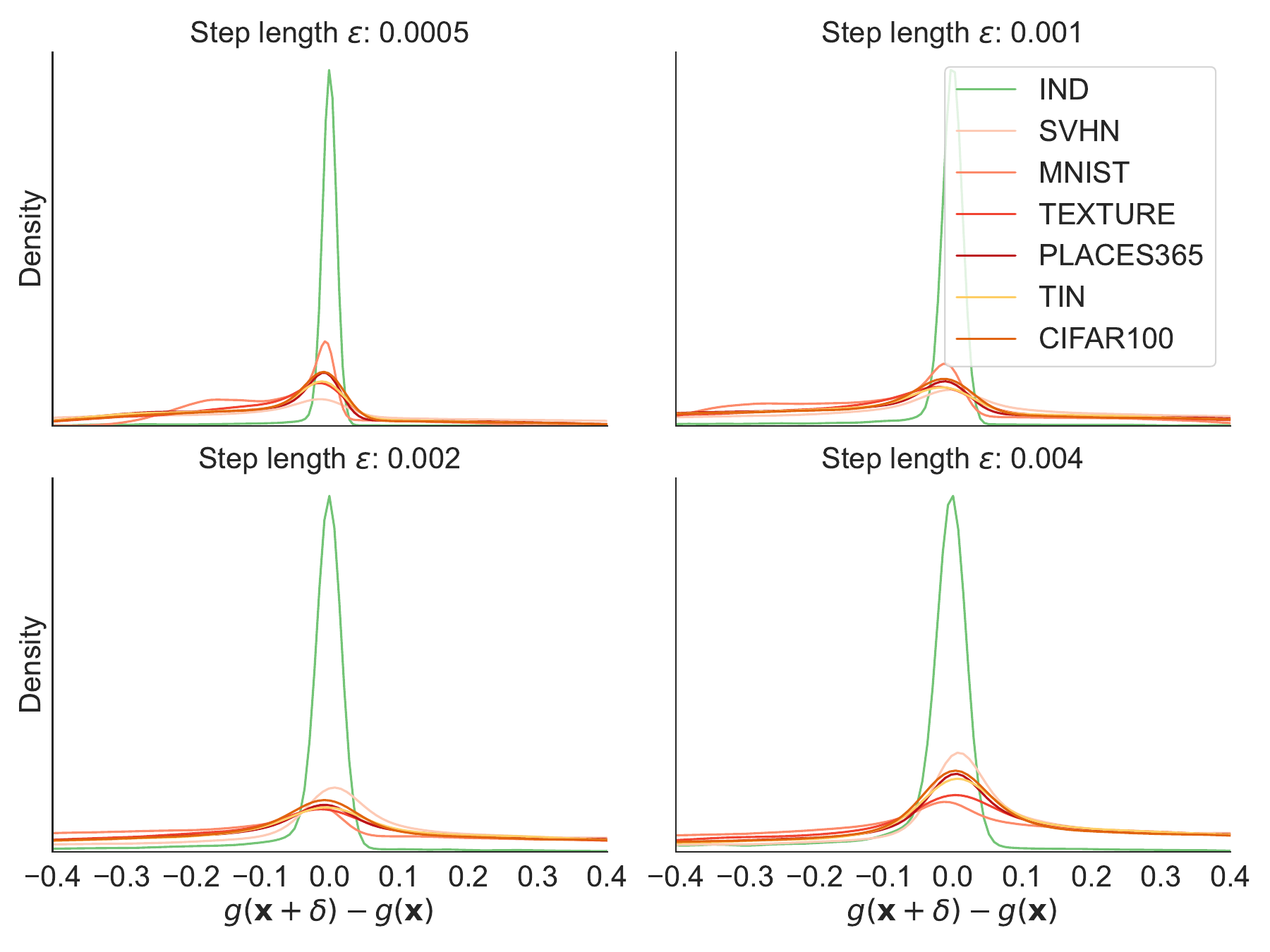}
  \centering
    \vspace{-0.75cm}
  \caption{Distribution plots of MSP score shift introduced by one-step gradient-based perturbation. OOD data endures more severe score shifts than IND data. The result is tested on a CIFAR-10 model with adversarial training~\cite{diffenderfer2021winning}.}
\vspace{-0.4cm}
  \label{figure: dz-steplength}
\end{figure}

\section{Experiment}
\begin{table*}[ht]
\centering
\resizebox{\textwidth}{!}{%
\begin{tabular}{p{0.2cm}l c c c c c c c}
\toprule
& & \multicolumn{7}{c}{\textbf{OOD detection performance: FPR@95 $ \downarrow $ / AUROC $ \uparrow $}}  \\
\cmidrule(lr){3-9}
& & \multicolumn{2}{c}{\textit{Near-OOD}} & \multicolumn{4}{c}{\textit{Far-OOD}} &  \\
\cmidrule(lr){3-4} \cmidrule(lr){5-8}
{} & Method & CIFAR100 & TIN & MNIST & SVHN & Texture & Places365 & Average \\
\midrule
\multirow{13}{*}{\rotatebox{90}{Default Model}} & MSP\cite{hendrycks2016baseline}  & 53.08/87.19 & 43.27/88.87 & 23.64/92.63 & 25.82/91.46 & 34.96/89.89 & 42.47/88.92 & 37.21/89.83 \\
& ODIN\cite{liang2017enhancing} & 77.00/82.18 & 75.38/83.55 & 23.82/\textbf{95.24} & 68.61/84.58 & 67.70/86.94 & 70.34/85.07 & 63.81/86.26 \\
& MDS~\cite{lee2018simple} & 52.81/83.59 & 46.99/84.81 & 27.30/90.10 & 25.96/91.18 & 27.94/92.69 & 47.67/84.90 & 38.11/87.88 \\
& GEN\cite{liu2023gen} & 58.75/87.21 & 48.59/89.20 & 23.00/93.83 & 28.14/91.97 & 40.74/90.14 & 47.03/89.46 & 41.04/90.30 \\
& EBO\cite{liu2020energy} & 66.60/86.36 & 56.08/88.80 & 24.99/94.32 & 35.12/91.79 & 51.82/89.47 & 54.85/89.25 & 48.24/90.00 \\
& VIM\cite{wang2022vim} & 49.19/87.75 & 40.48/89.62 & \textbf{18.35}/\underline{94.76} & \underline{19.29}/\textbf{94.50} & \textbf{21.16}/\textbf{95.15} & 41.44/89.49 & 31.65/\underline{91.88} \\
& KNN\cite{sun2022out} & \underline{37.64}/\textbf{89.73} & \textbf{30.37}/\underline{91.56} & \underline{20.05}/94.26 & 22.60/92.67 & \underline{24.06}/\underline{93.16} & \textbf{30.38}/\textbf{91.77} & \textbf{27.52}/\textbf{92.19} \\
& ASH\cite{djurisic2023extremely} & 87.31/74.11 & 86.25/76.44 & 70.00/83.16 & 83.64/73.46 & 84.59/77.45 & 77.89/79.89 & 81.61/77.42 \\
& Scale\cite{xu2024scaling} & 81.79/81.27 & 79.12/83.84 & 48.69/90.58 & 70.55/84.63 & 80.39/83.94 & 70.51/86.41 & 71.84/85.11 \\
& \cellcolor{blue!15}\ours-MSP & \cellcolor{blue!15}38.22/88.18 & \cellcolor{blue!15}32.20/90.03 & \cellcolor{blue!15}28.73/91.00 & \cellcolor{blue!15}22.34/92.35 & \cellcolor{blue!15}32.85/89.09 & \cellcolor{blue!15}33.94/89.72 & \cellcolor{blue!15}31.38/90.06 \\
& \cellcolor{blue!15}\ours-ENT & \cellcolor{blue!15}38.40/89.02 & \cellcolor{blue!15}31.64/91.00 & \cellcolor{blue!15}27.44/92.22 & \cellcolor{blue!15}21.56/93.46 & \cellcolor{blue!15}31.90/90.24 & \cellcolor{blue!15}33.12/90.73 & \cellcolor{blue!15}30.68/91.11 \\
& \cellcolor{blue!15}\ours-MSP-T & \cellcolor{blue!15}41.92/88.94 & \cellcolor{blue!15}32.63/91.31 & \cellcolor{blue!15}24.71/93.41 & \cellcolor{blue!15}20.76/93.96 & \cellcolor{blue!15}36.95/90.02 & \cellcolor{blue!15}34.20/91.22 & \cellcolor{blue!15}31.86/91.48 \\
& \cellcolor{blue!15}\ours-GEN & \cellcolor{blue!15}\textbf{37.38}/\underline{89.50} & \cellcolor{blue!15}\textbf{30.37}/\textbf{91.90} & \cellcolor{blue!15}24.07/92.91 & \cellcolor{blue!15}\textbf{19.23}/\underline{94.44} & \cellcolor{blue!15}34.91/90.27 & \cellcolor{blue!15}\underline{31.65}/\underline{91.72} & \cellcolor{blue!15}\underline{29.60}/91.79 \\
\midrule
\multirow{13}{*}{\rotatebox{90}{Robust Model: LRR ~\cite{diffenderfer2021winning}}} & MSP\cite{hendrycks2016baseline}  & 44.92/89.42 & 34.62/91.15 & 19.68/94.07 & 38.49/90.89 & 22.50/93.33 & 36.89/90.91 & 32.85/91.63 \\
& ODIN\cite{liang2017enhancing} & 75.48/77.85 & 75.48/76.37 & 26.62/95.09 & 84.96/66.60 & 66.88/82.95 & 82.98/73.76 & 68.73/78.77 \\

& MDS~\cite{lee2018simple} & 80.01/67.41 & 76.46/69.12 & 38.23/85.55 & 68.74/74.06 & 69.16/78.97 & 68.28/74.40 & 66.81/74.92 \\
& GEN\cite{liu2023gen} & 60.02/88.80 & 46.17/91.45 & 12.48/96.89 & 63.77/89.93 & 27.04/94.15 & 47.60/91.64 & 42.85/92.14 \\
& EBO\cite{liu2020energy} & 68.19/87.27 & 55.80/90.51 & \underline{9.77}/\underline{97.51} & 75.87/88.42 & 35.12/93.46 & 55.03/91.17 & 49.96/91.39 \\
& VIM\cite{wang2022vim} & 75.92/81.59 & 64.64/85.33 & 13.53/97.01 & 72.06/85.15 & 43.56/91.67 & 59.68/87.76 & 54.90/88.09 \\
& KNN\cite{sun2022out} & 45.46/90.20 & 35.28/92.18 & 16.86/95.99 & 31.48/92.85 & 22.33/94.92 & 28.81/93.49 & 30.04/93.27 \\
& ASH\cite{djurisic2023extremely} & 63.61/88.03 & 44.00/91.51 & 16.19/96.01 & 52.73/90.85 & 27.43/94.17 & 39.06/92.59 & 40.50/92.19 \\
& Scale\cite{xu2024scaling} & 59.68/88.22 & 48.21/90.97 & \textbf{8.87}/\textbf{97.71} & 71.97/88.04 & 25.93/94.62 & 51.47/91.09 & 44.35/91.77 \\
& \cellcolor{blue!15}\ours-MSP & \cellcolor{blue!15}30.92/89.82 & \cellcolor{blue!15}24.59/91.47 & \cellcolor{blue!15}27.78/91.98 & \cellcolor{blue!15}22.87/92.41 & \cellcolor{blue!15}27.13/92.32 & \cellcolor{blue!15}24.86/91.70 & \cellcolor{blue!15}26.36/91.62 \\
& \cellcolor{blue!15}\ours-ENT & \cellcolor{blue!15}31.08/91.00 & \cellcolor{blue!15}24.46/92.96 & \cellcolor{blue!15}25.74/93.65 & \cellcolor{blue!15}23.67/92.99 & \cellcolor{blue!15}24.52/93.86 & \cellcolor{blue!15}24.21/93.21 & \cellcolor{blue!15}25.61/92.95 \\
& \cellcolor{blue!15}\ours-MSP-T & \cellcolor{blue!15}\underline{30.64}/\underline{91.50} & \cellcolor{blue!15}\underline{21.99}/\underline{94.18} & \cellcolor{blue!15}13.19/96.39 & \cellcolor{blue!15}\textbf{12.64}/\underline{96.76} & \cellcolor{blue!15}\textbf{20.80}/\underline{95.01} & \cellcolor{blue!15}\underline{20.44}/\underline{94.82} & \cellcolor{blue!15}\underline{19.95}/\underline{94.78} \\
& \cellcolor{blue!15}\ours-GEN & \cellcolor{blue!15}\textbf{29.56}/\textbf{91.85} & \cellcolor{blue!15}\textbf{21.96}/\textbf{94.48} & \cellcolor{blue!15}13.20/96.44 & \cellcolor{blue!15}\underline{12.98}/\textbf{96.92} & \cellcolor{blue!15}\underline{20.86}/\textbf{95.16} & \cellcolor{blue!15}\textbf{20.39}/\textbf{95.13} & \cellcolor{blue!15}\textbf{19.82}/\textbf{95.00} \\
\bottomrule
\end{tabular}%
}
\caption{OOD detection performance with CIFAR-10 as IND. We report on the baseline model without adversarial training \cite{yang2022openood} and an adversarial robust model \cite{croce2021robustbench,diffenderfer2021winning}. Table format includes \textbf{best metric}, \underline{second best metric}, and \colorbox{blue!15}{our methods}. Observe that \ours's leading performance in distinguishing near-OOD data (\ie, CIFAR-100 and TIN), which are more challenging to detect than far-OOD data.}
\label{Table: Ciafr-10 OOD detection}
  \vspace{-0.2cm}
\end{table*}

We conduct the experiments following the evaluation protocol used in OpenOOD~\cite{yang2022openood}, a benchmark platform for OOD detection. We implemented \ours across several different OOD scores and tested it on various IND datasets.

\subsection{Experiment setup}
{\bf\noindent OOD detection methods.} To verify the generalization ability of the proposed method \ours, We implement four variants of \ours where perturbations are designed to minimize different softmax score functions. \textbf{\ours-MSP} and \textbf{\ours-MSP-T} stand for applying \ours on MSP functions without or with temperature scaling as defined in~\equref{equation: MSP definition}.
\textbf{\ours-ENT} employs the negative Shannon entropy of output softmax probabilities as the OOD detection score function. Additionally, we also apply \ours on the Generalized Entropy (GEN) \cite{liu2023gen}, which we term \textbf{\ours-GEN}. GEN also operates on softmax probability, using two additional parameters $\gamma$ and $M$: 
$
    g_{\tt GEN}(\mathbf{x})  = \sum_{j=1}^M p_j^{\gamma} (1 - p_{j})^{\gamma}.
    $

{\bf\noindent Test Datasets.}
We briefly introduce the IND datasets and corresponding OOD test sets used in the OpenOOD benchmark. Near-OOD data resemble the training data and thus are more challenging to distinguish, while far-OOD inputs are more obviously different from IND data.
\begin{itemize}
    \item CIFAR-10 model: The near-OOD datasets are CIFAR-100 and TIN \cite{le2015tiny}, while far-OOD datasets include MNIST \cite{deng2012mnist}, SVHN \cite{netzer2011reading}, Texture \cite{cimpoi2014describing}, and Places365 \cite{zhou2017places}.
    \item CIFAR-100 model: Its near-OOD dataset is CIFAR-10 and TIN, and its far-OOD datasets are the same as CIFAR-10's. 
    \item For Imagenet-1K models, near-OOD datasets include the Semantic Shift Benchmark (SSB) \cite{vaze2022openset} and NINCO \cite{bitterwolf2023or}. The far-OOD datasets consist of iNaturalist \cite{van2018inaturalist}, Texture \cite{cimpoi2014describing}, and OpenImage-O \cite{wang2022vim}.
\end{itemize}

{\bf\noindent Implementation details.} OpenOOD benchmark uses ResNet-18 and ResNet-50 \cite{he2016deep} as the backbone models for CIFAR and ImageNet, respectively. Backbone models for robust models contain WideResNet, details can be found in~\cite{croce2021robustbench}. A sample validation set is provided for methods that require hyperparameters to search for the optimal setting. The test benchmark searches for the optimal perturbation size $\epsilon$ and step number $K$  for the \ours from a hyperparameter list.

{\bf\noindent Robust models.} Since our method stems from the robustness toward perturbation, in addition to models provided in the OpenOOD benchmark, we leverage robust models from Robustbench~\cite{croce2021robustbench}, a benchmark platform for models trained against corruption or adversarial attacks. We mainly refer to the top models that are robust to general corruptions listed in RobustBench's model zoo\footnote{https://robustbench.github.io/}. %
By incorporating robust models into the OOD detection test, we intend to answer the following questions:
\begin{itemize}
    \item How would adversarial training affect models' OOD detection performance?
    \item Will adversarial training improve OOD detection performance of \ours? 
\end{itemize}

{\bf\noindent Baseline methods.}
 Classic baselines include softmax scores, such as MSP~\cite{hendrycks2016baseline}, TempScaling~\cite{guo2017calibration}, Entropy~\cite{hendrycks2016baseline}, and logits-based scores, such as MLS~\cite{hendrycks2019scaling} and EBO~\cite{liu2020energy}. ODIN~\cite{liang2017enhancing} is also a highly related baseline using perturbation-based preprocessing. GEN~\cite{liu2023gen} has been considered one of the most promising methods using softmax scores.
 
 We also consider the most competitive methods for each dataset evaluated by OpenOOD benchmark \cite{yang2022openood,zhang2023openood}. For CIFAR-10 dataset, two feature-based methods, VIM~\cite{wang2022vim} and KNN~\cite{sun2022out} have leading performance and both require IND data. As for CIFAR-100, MLS and RMDS \cite{ren2021simple} have the best AUROC performance for near- and far-OOD data, respectively. 
 A recent category of methods uses activation modification and energy scores, including Scale~\cite{xu2024scaling}, ASH~\cite{djurisic2023extremely}, and ReAct~\cite{sun2021react}. They have the most promising results on ImageNet.
 
\begin{figure}[t]
  \includegraphics[width=0.475\textwidth]{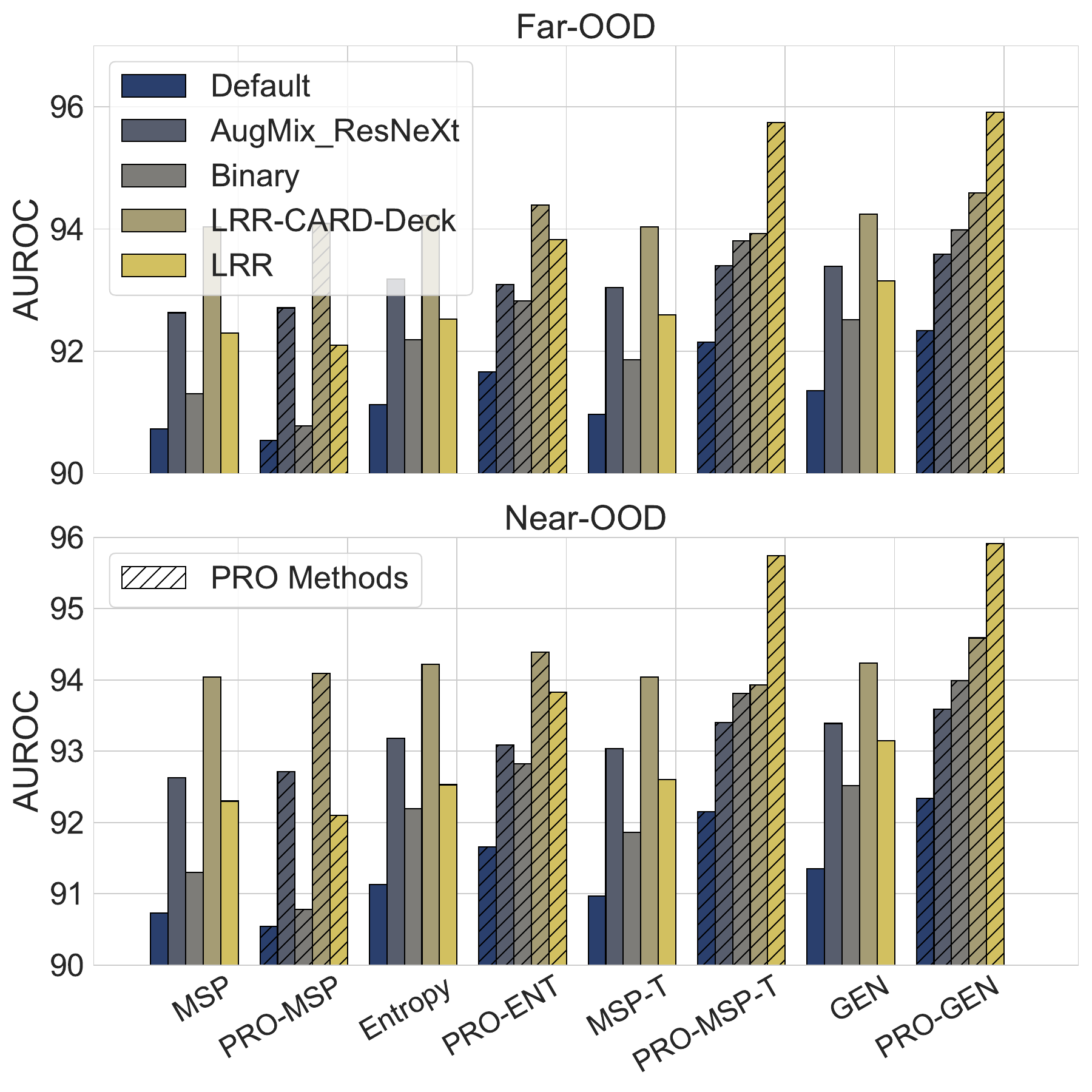}
  \centering
    \vspace{-0.8cm}
  \caption{AUROC performance on CIFAR-10 tested across baseline model \cite{yang2022openood} and adversarially robust models (\ie, AugMix\_ResNeXt, Binary, LRR-CARD-Deck, and LRR) \cite{diffenderfer2021winning,hendrycks*2020augmix}. \ours stably enhance four representative softmax scores, MSP, entropy \cite{hendrycks2016baseline}, temperature scaling MSP-T \cite{guo2017calibration}, and GEN \cite{liu2023gen}.}
  \label{figure: aurocsbar-cifar10}
    \vspace{-0.3cm}
\end{figure}

\subsection{OOD detection performance}
\subsubsection{CIFAR-10}
{\bf\noindent Robust model improves \ours performance.}
\tabref{Table: Ciafr-10 OOD detection} summarizes the OOD detection performance with CIFAR-10 as IND. We present the comparison between a default model provided by OpenOOD \cite{zhang2023openood} and an adversarial robust model from RobustBench \cite{croce2021robustbench}. The robust model is trained with Learning Rate Rewinding (LRR) \cite{diffenderfer2021winning}, which has leading robust accuracy under common corruption. The result for default model is averaged over 3 different checkpoints, while the robust model only has one checkpoint. We also present the AUROC performance tested on other models from RobustBench in \figref{figure: aurocsbar-cifar10}.  

{\bf\noindent Comparison with SOTA baselines.} Recent studies~\cite{liu2023gen,tang2024cores} have limited their comparison to IND-free, post-hoc methods, assuming IND-feature-based approaches (\eg, VIM and KNN) gain an extra advantage by using IND data or are not generally applicable. Nevertheless, we see that \ours-enhanced scores, as an IND-free technique, significantly surpass IND-feature-based baselines when tested on robust models. The results also show that \ours has top performance on distinguishing near-OOD data such as CIFAR-100 and TIN for both the default and robust models compared to all baselines.

OOD methods that use activation modification and energy scores (\eg, ReAct, Ash, and Scale) do not seem to perform well on the small-scale model CIFAR-10. Another noteworthy comparison is with ODIN, which also uses gradient-based perturbation. We can see that ODIN suffers from degraded performance compared to the original MSP score.

\begin{table}[t]
\centering
\begin{tabular}{p{0.3cm}l c c}
\toprule
& & \multicolumn{2}{c}{\textbf{FPR@95 $ \downarrow $ / AUROC $ \uparrow $}}  \\
{} & Method & Near-OOD & Far-OOD \\
\midrule
\multirow{15}{*}{\rotatebox{90}{Default Model}} & MSP\cite{hendrycks2016baseline}  & 54.80/80.27 & 58.70/77.76  \\
& Entropy\cite{hendrycks2016baseline} & 54.58/81.14 & 58.33/78.97 \\
& TempScaling\cite{guo2017calibration} & \underline{54.49}/80.90 & 57.94/78.74 \\
& GEN\cite{liu2023gen} & \textbf{54.42}/\underline{81.31} & 56.71/79.68 \\
& VIM\cite{wang2022vim} & 62.63/74.98 & \textbf{50.74}/81.70 \\
& KNN\cite{sun2022out} & 61.22/80.18 & 53.65/\underline{82.40} \\
& ODIN\cite{liang2017enhancing} & 57.91/79.90 & 58.86/79.28 \\
& EBO\cite{liu2020energy} & 55.62/80.91 & 56.59/79.77 \\
& MLS\cite{hendrycks2019scaling} & 55.47/81.05 & 56.73/79.67 \\
& RMDS\cite{ren2021simple} & 55.46/80.15 & \underline{52.81}/\textbf{82.92} \\
& Scale\cite{xu2024scaling} & 55.68/80.99 & 54.09/81.42 \\
& \cellcolor{blue!15}\ours-MSP & \cellcolor{blue!15}56.10/80.78 & \cellcolor{blue!15}58.53/78.26 \\
& \cellcolor{blue!15}\ours-ENT & \cellcolor{blue!15}55.19/81.22 & \cellcolor{blue!15}57.18/79.44 \\
& \cellcolor{blue!15}\ours-MSP-T & \cellcolor{blue!15}55.65/81.04 & \cellcolor{blue!15}55.52/79.71 \\
& \cellcolor{blue!15}\ours-GEN & \cellcolor{blue!15}54.73/\textbf{81.36} & \cellcolor{blue!15}56.13/79.81 \\
\midrule
\multirow{15}{*}{\rotatebox{90}{Robust Model: LRR-CARD-Deck\cite{diffenderfer2021winning}}} & MSP\cite{hendrycks2016baseline}  & 52.94/81.42 & 54.10/78.60  \\
& Entropy\cite{hendrycks2016baseline} & 52.94/81.85 & 54.10/79.10 \\
& TempScaling\cite{guo2017calibration} & 52.94/81.42 & 54.10/78.60 \\
& GEN\cite{liu2023gen} & 52.96/81.88 & 54.10/79.16 \\
& VIM\cite{wang2022vim} & 85.07/58.13 & 73.61/65.85 \\
& KNN\cite{sun2022out} & 69.64/72.18 & \textbf{37.41}/\textbf{87.26} \\
& ODIN\cite{liang2017enhancing} & 54.07/79.38 & 50.53/81.17 \\
& EBO\cite{liu2020energy} & 52.95/81.90 & 54.10/79.16 \\
& MLS\cite{hendrycks2019scaling} & 52.94/81.42 & 54.10/78.61 \\
& RMDS\cite{ren2021simple} & \textbf{51.13}/82.08 & \underline{49.57}/\underline{81.50} \\
& Scale\cite{xu2024scaling} & 77.39/67.26 & 58.42/78.90 \\
& \cellcolor{blue!15}\ours-MSP & \cellcolor{blue!15}52.43/82.09 & \cellcolor{blue!15}53.75/78.48 \\
& \cellcolor{blue!15}\ours-ENT & \cellcolor{blue!15}52.53/\underline{82.49} & \cellcolor{blue!15}56.29/78.17 \\
& \cellcolor{blue!15}\ours-MSP-T & \cellcolor{blue!15}53.06/81.93 & \cellcolor{blue!15}56.67/77.53 \\
& \cellcolor{blue!15}\ours-GEN & \cellcolor{blue!15}\underline{52.38}/\textbf{82.50} & \cellcolor{blue!15}55.89/78.42 \\
\bottomrule
\end{tabular}
  \vspace{-0.15cm}
\caption{OOD detector performance with CIFAR-100 as IND. We listed the averaged metrics in near-OOD and far-OOD, emphasizing \ours is the most powerful post-hoc method for distinguishing near-OOD, especially for models with adversarial training. 
}
\label{table: cifar-100}
  \vspace{-0.6cm}
\end{table}

\subsubsection{CIFAR-100}
{\bf\noindent \ours is most competitive for near-OOD detection.} We present averaged near-OOD and far-OOD performance in~\tabref{table: cifar-100}, highlighting that \ours variants generally demonstrate competitive performance in the near-OOD setting, which is emphasized by the relatively high AUROC scores. Noteworthy is that the enhancement of applying \ours to softmax scores is more substantial for the robust model. %

{\bf\noindent Comparison with ODIN.} One can also notice that ODIN tends to improve MSP in far-OOD settings but suffers from performance degradation for near-OOD, while \ours does not. Intuitively, \ours pushes OOD scores down, thus helping to separate near-OOD with falsely high prediction confidence. Meanwhile, ODIN aims to do the opposite and boost the OOD scores, making near-OOD have higher prediction confidence and harder to distinguish. %

\begin{figure}[t]
  \includegraphics[width=0.475\textwidth]{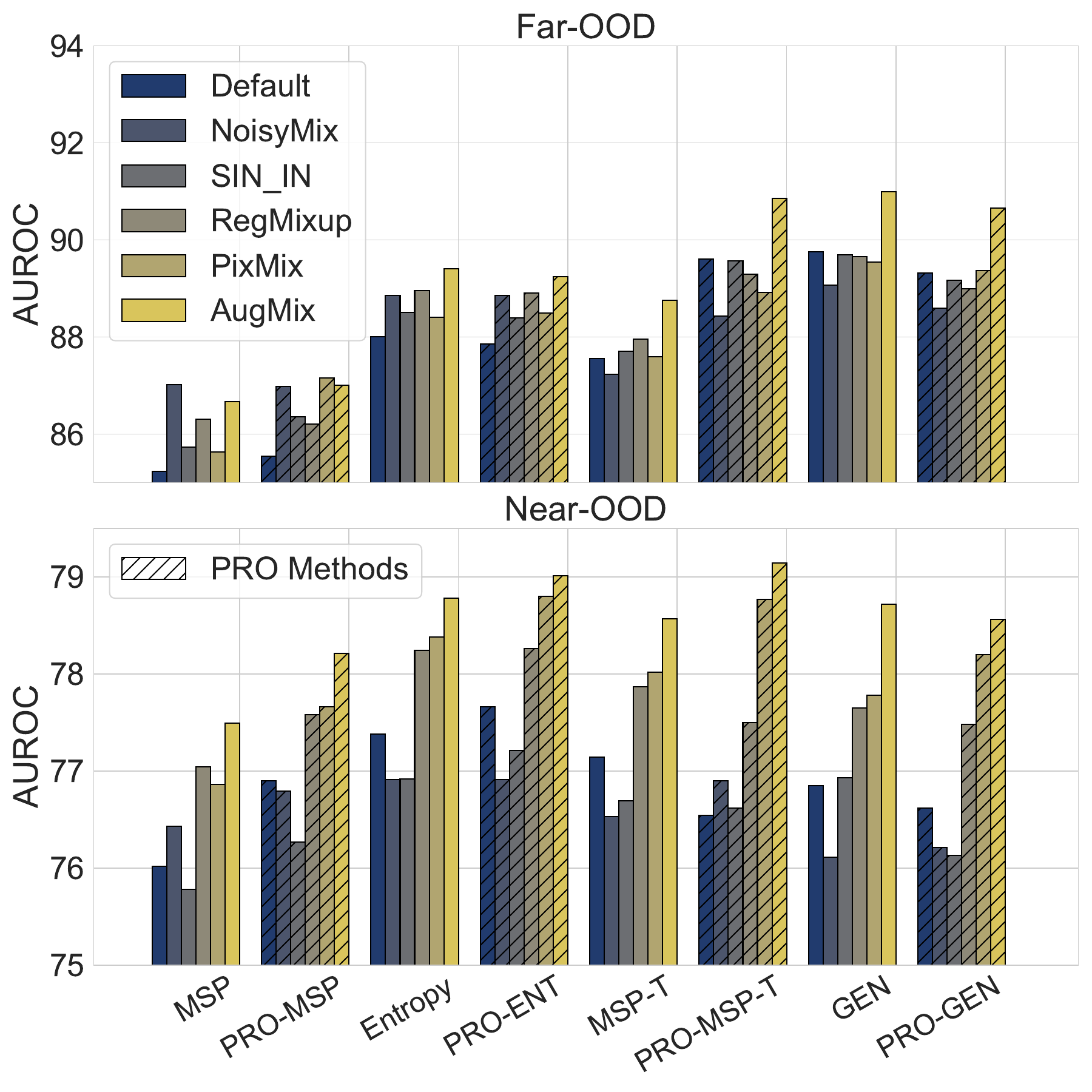}
  \centering
    \vspace{-0.7cm}
  \caption{AUROC performance of PRO methods tested on ImageNet. PRO works most well with data augmentation methods PixMix \cite{hendrycks2022pixmix} and AugMix \cite{hendrycks*2020augmix}, while the other two robust models NoisyMix \cite{erichson2024noisymix} and SIN\_IN \cite{geirhos2018imagenet} have negative impacts on OOD detections. MSP, temperature scaling, and Entropy can still benefit from PRO to enhance near-OOD detection.}
  \label{figure: aurocsbar-imagenet}
    \vspace{-0.4cm}
\end{figure}

\subsubsection{ImageNet-1K}
{\bf\noindent } Our test on ImageNet shows \ours has hindered performance as the model scale increases. Activation modification methods such as Scale~\cite{xu2024scaling}, ASH \cite{djurisic2023extremely}, and ReAct~\cite{sun2021react} work best for ImageNet, outperforming baselines from other categories. Due to the page limit, detailed OOD detection results are provided in the appendix.  

\figref{figure: aurocsbar-imagenet} illustrates the performance impact of different adversarial training protocols and data augmentation methods. PixMix~\cite{hendrycks2022pixmix} and AugMix~\cite{hendrycks*2020augmix}, as provided in the OpenOOD benchmark~\cite{zhang2023openood}, both improve model robustness and significantly enhance the AUROC result for \ours methods. Additionally, we include two adversarially robust models, NoisyMix \cite{erichson2024noisymix} and SIN-IN~\cite{geirhos2018imagenet}. However, NoisyMix and SIN-IN result in degraded performance of softmax scores, particularly in near-OOD scenarios.

The figure also compares softmax baselines with \ours methods, distinguished by the slashed texture. While \ours does not show significant improvement for far-OOD cases on ImageNet, \ours-MSP, \ours-MSP-T, and \ours-ENT exhibit AUROC gains in near-OOD detection. In the following section, we discuss how model scale affects adversarial robustness and the implications for perturbation-based OOD separation.

\begin{figure*}[ht]
  \includegraphics[width=0.99\textwidth]{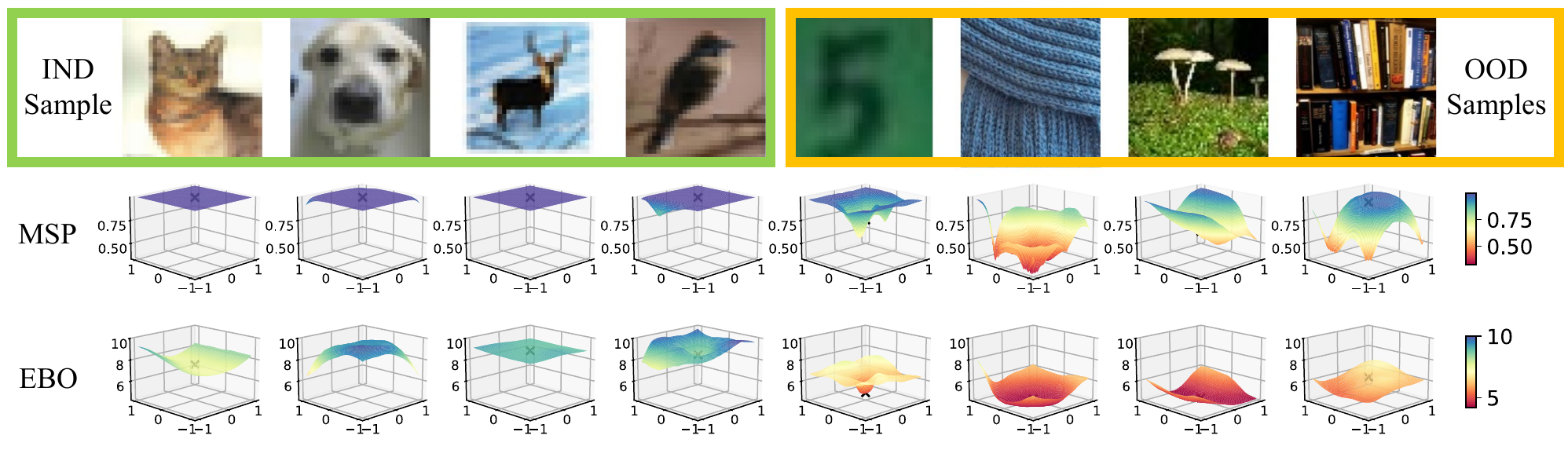}
  \centering
  \vspace{-0.3cm}
  \caption{Visualization of OOD score function landscape regarding image perturbation including maximum confidence (MSP) and energy-based OOD (EBO) detection score. We select four IND images from CIFAR-10~\cite{krizhevsky2009cifar}, and four OOD images from SVHN~\cite{netzer2011reading}, Texture~\cite{cimpoi2014describing}, TIN~\cite{le2015tiny}, and Place365~\cite{zhou2017places}, deploying random projection to plot the landscape \cite{li2018visualizing}. 
  The contour color indicates the score value, which is proportional to the contour height.
  The $x$- and $y$-axes correspond to $\alpha$ and $\beta$ in Eq.~\eqref{equation: random projection}, representing perturbation magnitudes in different directions. Scores for unperturbed images are marked with ``$\times$'' in contours.}
  \vspace{-0.3cm}
  \label{figure:contoursplots}
\end{figure*}
\subsection{Perturbation robustness analysis}
{\bf\noindent Score function landscape visualization.}
We adopt the random projection method \cite{li2018visualizing} to provide an intuitive visualization of perturbation robustness. we aim to visualize the landscape of OOD scores functions in input image space. The visualization involves two random perturbation directions $\delta_1$ and $\delta_2$. Given an image data $\mathbf{x}$, we plot the contour of function $\mathrm{z}(\alpha,\beta)$ defined as:
$
\label{equation: random projection}
    \mathrm{z}(\alpha,\beta)=g(\mathbf{x}+\alpha \delta_1 +\beta \delta_2).
$
Note that the landscape in the gradient-based direction would be much sharper compared to other random directions.

\figref{figure:contoursplots} visualizes various IND and OOD images for score function visualization as described in the caption. The smoother, less varied contour of the MSP function for IND inputs suggests greater robustness against perturbations when compared to the more varied MSP contours for OOD inputs. We observe that softmax-based scores such as MSP generally have a more stable landscape than logit-based scores, such as EBO. We hypothesize that this is due to the subtler connection between logits and the cross-entropy loss. %

{\bf\noindent Score shift distribution.}
We use the robustness metric of score shift to empirically validate the inequality in~\equref{eq:noise_robost}. \figref{figure: dz-steplength} indicates the same perturbation would induce a more significant shift for OOD inputs than for IND inputs. It is noteworthy that, under a large perturbation step, a large portion of OOD scores have been increased even when the perturbation is at a negative gradient direction. This supports the necessity of including a minimization step described in our approach to prevent the perturbation from boosting false confidence for OOD inputs.

{\bf\noindent \ours does not depend on adversarial training.} 
We observe that even baseline models without adversarial training exhibit a robustness difference between IND and OOD inputs. This occurs because standard training protocols inherently create smoother score landscapes for IND data, resulting in inherent robustness. The property suggests \ours can be adopted to enhance OOD detection performance for models without adversarial training, as indicated in~\tabref{Table: Ciafr-10 OOD detection} and~\tabref{table: cifar-100}.

{\bf\noindent Increase of model scale undermines its IND perturbation robustness.} Experimental results have shown that \ours works best with CIFAR-10, a small-scale model with a limited class number. The enhancement of the method \ours in softmax scores gradually attenuates as the model scale increases. \figref{figure: MSPshift-modelscale} provides insights on why \ours has limitations working with large-scale models. It shows the difference in score shift introduced by the same level perturbation for different model scales. 

In the left plot of~\figref{figure: MSPshift-modelscale} describing IND score shifts, the distribution centered at 0 suggests that the score is barely altered by perturbation. 
We highlight the insight that scores for IND inputs suffer from greater shifts as the model's class numbers increase. In other words, under the same training protocol, large-scale models are more vulnerable to score shift under perturbation, thus limiting the enhancement of adopting \ours methods.

\begin{figure}[t]
    \centering
  \includegraphics[width=0.475\textwidth]{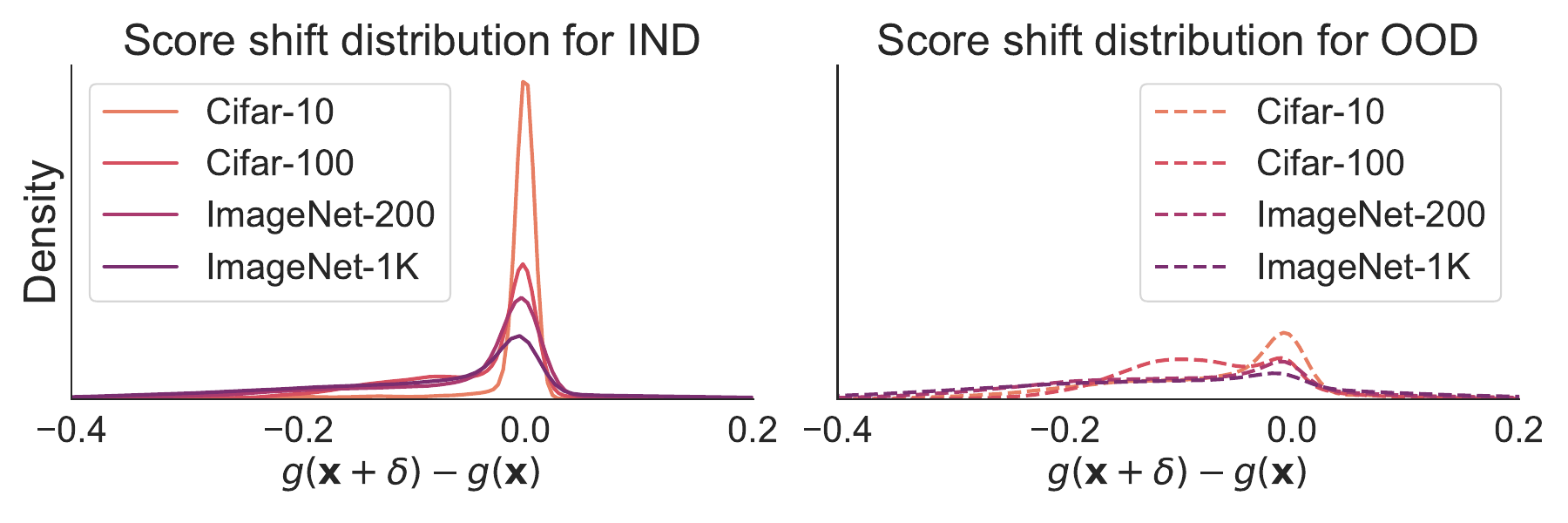}
   \vspace{-0.7cm}
  \caption{Applying the same perturbation $\epsilon=0.001$ leads to different MSP score shifts for different problem scales. The CIFAR-10 model has the best perturbation robustness for IND inputs, other models suffer more IND shift as the class number increases.  }
  
  \label{figure: MSPshift-modelscale}
  \vspace{-0.3cm}
\end{figure}

\section{Conclusion}
In this study, we propose a new OOD detection technique of Perturbation Rectified OOD (\ours) detection. The proposed method stems from an observation that OOD detection scores for OOD inputs are more vulnerable to being attenuated by perturbation. We provide analysis and empirical validation to support the observation. A comprehensive comparison with state-of-the-art baselines demonstrates the effectiveness of \ours, especially its leading performance in distinguishing challenging near-OOD inputs. Furthermore, the increased robustness of the perturbation from adversarial training greatly enhances the performance of OOD detection of \ours. We view our proposed approach as a bridge between adversarial robustness and OOD detection. By leveraging the strengths of both domains, we aim to move towards the safer deployment of deep learning models.

\vspace{0.1cm}
{\bf\noindent Acknowledgments: } 
This work is supported in part by the National Science Foundation under Award \#2420724, the Office of Naval Research under Grant N00014-24-1-2028, and the Army Research Laboratory under Cooperative Agreement Number W911NF-24-2-0163.\footnote{The views and conclusions contained in this document are those of the authors and should not be interpreted as representing the official policies, either expressed or implied, of the Army Research Laboratory or the U.S. Government. The U.S. Government is authorized to reproduce and distribute reprints for Government purposes notwithstanding any copyright notation herein.} 
We thank the anonymous CVPR reviewer for improving the tightness of the bound. %

\clearpage
{
    \small
    \bibliographystyle{ieeenat_fullname}
    \bibliography{main}
}

\clearpage

\maketitlesupplementary
\setcounter{section}{0}
\renewcommand{\theHsection}{A\arabic{section}}
\renewcommand{\thesection}{A\arabic{section}}
\renewcommand{\thetable}{A\arabic{table}}
\setcounter{table}{0}
\setcounter{figure}{0}
\renewcommand{\thetable}{A\arabic{table}}
\renewcommand\thefigure{A\arabic{figure}}
\renewcommand{\theHtable}{A.Tab.\arabic{table}}%
\renewcommand{\theHfigure}{A.Abb.\arabic{figure}}%
\renewcommand\theequation{A\arabic{equation}}
\renewcommand{\theHequation}{A.Abb.\arabic{equation}}%

{\bf \noindent The appendix is organized as follows:}
\begin{itemize}
    \item In~\cref{sec:supp_ent}, we provide analysis following derivations in~\secref{Section: Adversarially robustness for OOD detection}. Similarly, we intend to provide an IND Entropy score bound from adversarial training.
    \item In~\cref{sec:supp_additional result}, we present additional experimental results to better illustrate the effectiveness of \ours.
    \item In~\cref{sec: supp_implementation}, we introduce the implementation details including hardware and hyperparameters.
\end{itemize}
In addition to the appendix, we have attached the code for reference if more details are needed.

\section{Adversarial robustness of entropy score}
\label{sec:supp_ent}

We aim to show the relationship between adversarial robustness to the lower bound of the perturbed IND entropy score by demonstrating that perturbation has a limited effect on attenuating IND entropy scores. 

The entropy score for OOD detection is defined as the negative Shannon entropy, aligning with the conventional setting where higher scores indicate IND inputs:
\begin{equation}
    g_{\tt ENT}(\mathbf{x}) = -H(f (\mathbf{x}))=  \sum_{i=1}^{C} p_i(\mathbf{x}) \log p_i(\mathbf{x})
\end{equation}
\label{Analysis: entropy score}

The analysis follows the derivation for the MSP score in~\secref{Section: Adversarially robustness for OOD detection}. We begin by rewriting the negative prediction entropy in terms of the MSP score and the probabilities of the remaining classes:

\begin{align}
    & -H(f(\mathbf{x}+\delta)) \\
    & = \sum_{i=1}^{C} p_i(f(\mathbf{x}+\delta)) \log p_i(f(\mathbf{x}+\delta)) \notag\\
    &= p_{\text{max}} \log p_{\text{max}} + \sum_{j=2}^{C} p_j \log p_j \notag\\
    &> p_{\text{max}} \log p_{\text{max}} + (C-1) p_a \log p_a, \notag
\end{align}
where $p_a =({1 - p_{\text{max}}})/({C-1})$ denotes the probability evenly distributed among the remaining classes, leading to the maximum prediction entropy given the dominant class probability $p_{\text{max}}$.
Next, we continue to rewrite the lower bound of the entropy score:
\begin{align}
    \Rightarrow & = p_{\text{max}} \log p_{\text{max}} + (1 - p_{\text{max}}) \log \frac{1 - p_{\text{max}}}{C-1} \\
    & = p_{\text{max}} \log p_{\text{max}} + (1 - p_{\text{max}})\log(1 - p_{\text{max}}) \notag\\
     &\quad + p_{\text{max}} \log(C-1) -  \log(C-1)\notag
\end{align}

Denote $h(p)=p \log p +(1-p)\log(1-p)+p\log(C-1) -log(C-1)$, this function is convex and non-decreasing when $c\in [1/C,1]$. Apply Jensen's inequality, and substitute ~\equref{equation: lowerboundMSP}, we have \footnote{We thank the anonymous reviewer for their helpful suggestion regarding the derivation in ~\equref{equ: entropy score lower bound}.}:
\begin{equation}
\label{equ: entropy score lower bound}
    E[-H(f (\mathbf{x}))]\ge E[h(p_{\text{max}})] \ge h(E[p_{\text{max}}]) \ge h(\exp((-\mathcal{E})))
\end{equation}

\section{Additional results}
\label{sec:supp_additional result}
{\bf\noindent Perturbation robustness analysis.} We extend the analysis of robustness differences using the metric of score shift. Similar to \figref{figure: dz-steplength}, we evaluate score shifts under one-step perturbations of varying magnitudes. In \figref{figure: dz-cifar10-alloods}, the MSP score shifts are shown for a CIFAR-10 model without adversarial training. These results illustrate that OOD scores are generally more susceptible to perturbations compared to IND scores even without adversarial training. Additionally, we analyze score shifts on ImageNet models in \figref{figure: dz-imagenet-default} and \figref{figure: dz-imagenet-pixmix}. While a significant proportion of IND scores remain robust, forming a peak distribution near zero, a notable portion of IND scores still experience significant decreases under perturbation.

{\bf\noindent Score distribution shift.} To provide further intuition on how PRO reshapes the original score distribution, \figref{fig: distribution change} compares the PRO-enhanced scores with original MSP and ENT scores. As demonstrated in the plots, PRO effectively reduces the score values for OOD inputs, resulting in a distribution shift toward lower values. However, we can observe that the shifts also happened within IND scores. These shifts are particularly notable for the ImageNet model, especially MSP scores, limiting the enhancement from PRO. 

{\bf\noindent OOD detection performance on ImageNet.}
Detailed OOD detection performance metrics for ImageNet are provided in \tabref{table: OOD performance ImageNet}. We present a default model and three models trained with data augmentation procedures PixMix~\cite{hendrycks2022pixmix}, AugMix~\cite{hendrycks*2020augmix}, and RegMixup~\cite{pinto2022using}. %
We focus on the comparison with softmax scores and other gradient-based methods. The gradient-based method GradNorm~\cite{huang2021importance} shows significant performance degradation for models trained with PixMix and AugMix, indicating that gradients with respect to weights are highly sensitive to data augmentations. ODIN exhibits reduced far-OOD performance across all models compared to the MSP baseline.

In contrast, our proposed method, \ours, provides consistent improvements over basic scores such as MSP and Entropy, establishing \ours as the most competitive post-hoc method for near-OOD detection among the compared baselines. However, the effect of \ours on Temperature-scaled MSP and GEN is inconsistent across models. We attribute this variability to the additional hyperparameters in these methods, which increase dependence on the evaluation set's comprehensiveness.

{\bf\noindent Additional metrics on CIFAR-10.} 
\tabref{table: supp-cifar10robustmodels} provides the OOD detection performance tested on the other three robust models as an extension to \tabref{Table: Ciafr-10 OOD detection}. Both LRR-CARD-Deck and Binary-CARD-Deck have adopted an ensemble of models, making most post-hoc methods perform similarly to MSP baseline. We average the activations between models in an ensemble to implement Scale, Ash, and React. The binary model has an unconventional linear layer thus we have not implemented activation-modification methods on it. \ours has improved most averaged metrics of four softmax scores on Augmix models, achieving leading performance among baselines.

{\bf\noindent Metrics on different CIFAR-100 models.}
We present OOD detection metrics on five different CIFAR-100 models in \tabref{Table: supp-cifar100-robustmodels}. For this analysis, we focus on original softmax scores and ODIN as baselines to emphasize the enhancements achieved by \ours across different models. For a detailed comparison with other representative state-of-the-art methods, please refer to \tabref{table: cifar-100}. \ours provides consistent improvements across different models, particularly for temperature-scaled confidence, entropy, and GEN. As shown in the averaged metrics, \ours-MSP-T, \ours-MSP-ENT, and \ours-GEN demonstrate leading performance across most models.

\begin{figure}[t]
  \includegraphics[width=0.49\textwidth]{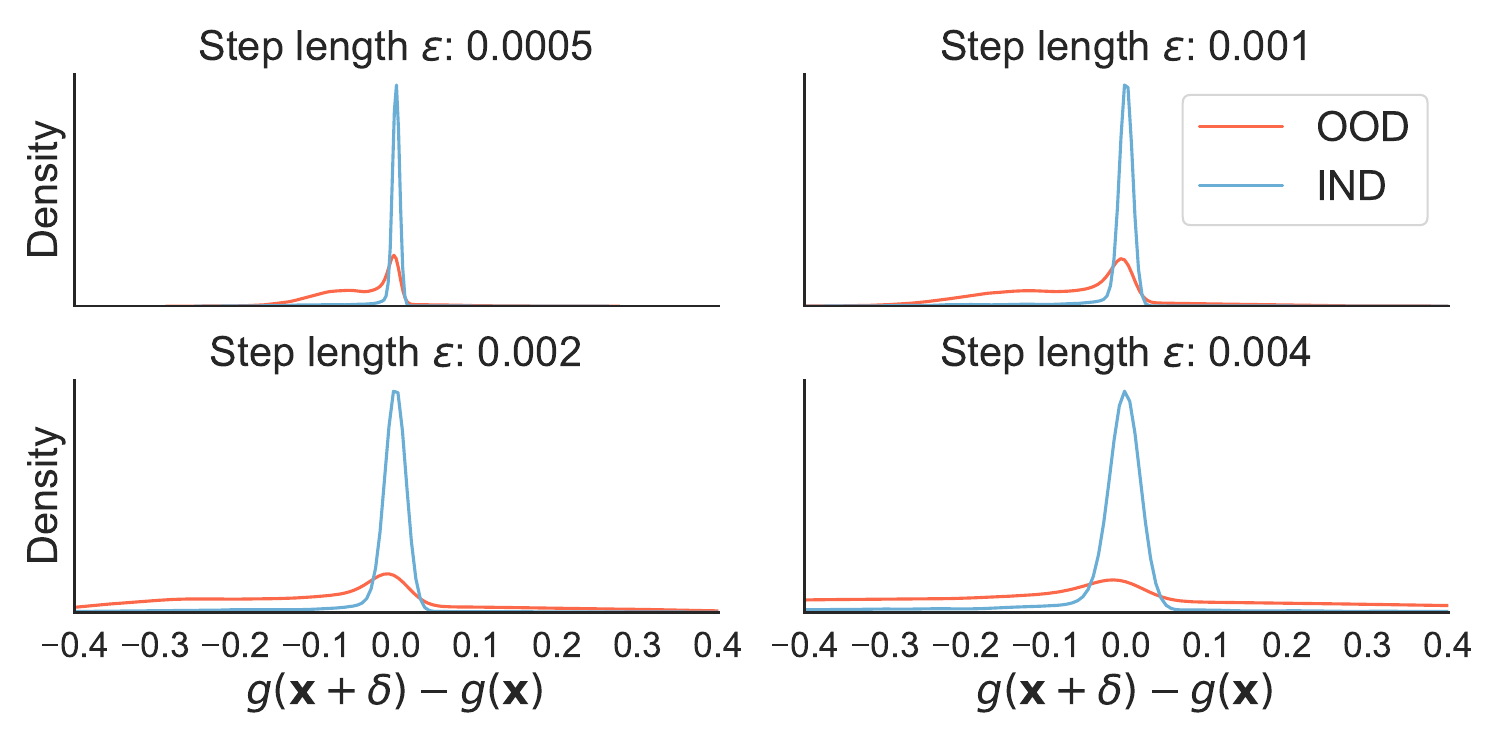}
  \centering
    \vspace{-0.3cm}
  \caption{Distribution plots of MSP score shift introduced by a bounded perturbation. It is tested on a robust CIFAR-10 model without adversarial training.}
\vspace{-0.3cm}
  \label{figure: dz-cifar10-alloods}
\end{figure}
\begin{figure}[t]
  \includegraphics[width=0.49\textwidth]{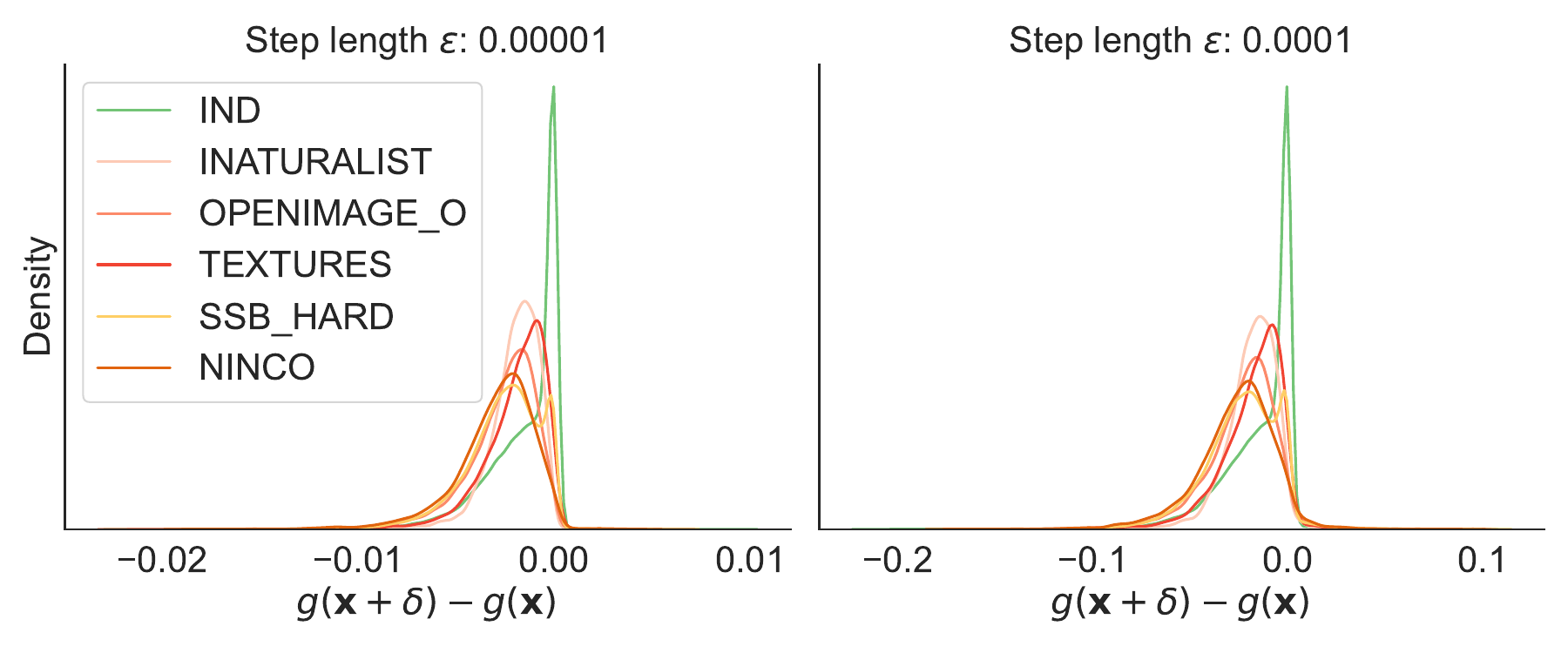}
  \centering
    \vspace{-0.7cm}
  \caption{Distribution plots of MSP score shift introduced by one-step perturbation on a default ImageNet model without adversarial training.}
\vspace{-0.3cm}
  \label{figure: dz-imagenet-default}
\end{figure}

\begin{figure}[t]
  \includegraphics[width=0.49\textwidth]{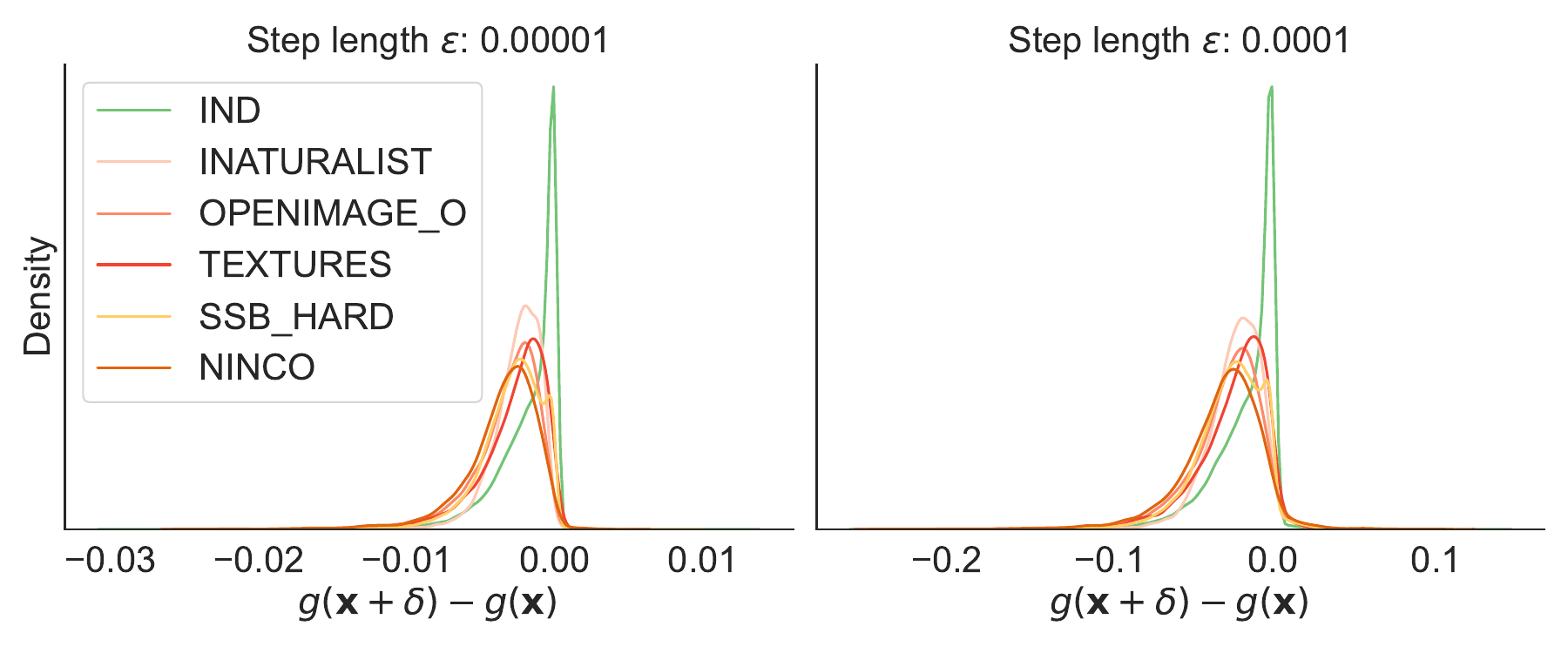}
  \centering
    \vspace{-0.7cm}
  \caption{Distribution plots of MSP score shift introduced by one-step perturbation on an ImageNet model trained with PixMix~\cite{hendrycks2022pixmix} data augmentation}
\vspace{-0.3cm}
  \label{figure: dz-imagenet-pixmix}
\end{figure}

\begin{figure*}[!t]
    \centering

    \begin{subfigure}[b]{0.4\textwidth}
        \centering
        \includegraphics[width=\textwidth]{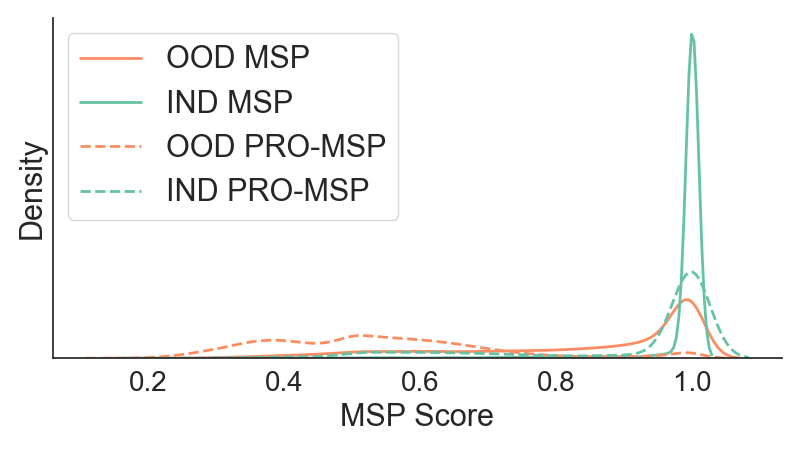} 
        \caption{CIFAR-10: LRR \cite{diffenderfer2021winning}}
    \end{subfigure}
    \hspace{0.2cm}
    \begin{subfigure}[b]{0.4\textwidth}
        \centering
        \includegraphics[width=\textwidth]{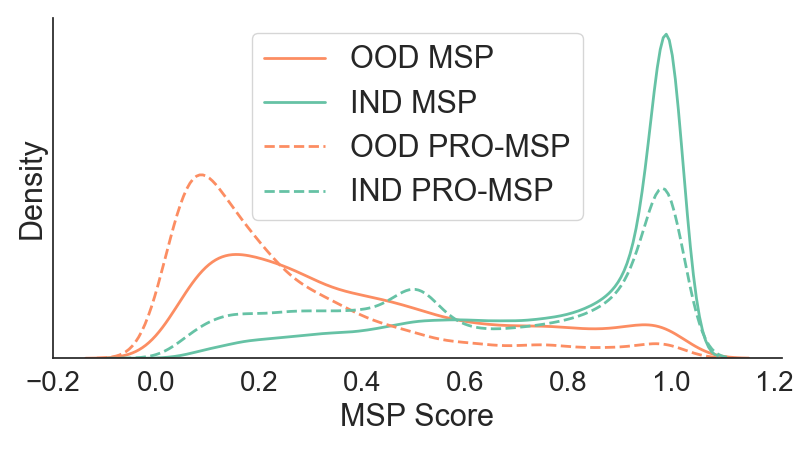}   
        \caption{ImageNet: AugMix\cite{hendrycks*2020augmix}}
    \end{subfigure}

    \label{fig: }
    \begin{subfigure}[b]{0.4\textwidth}
        \centering
        \includegraphics[width=\textwidth]{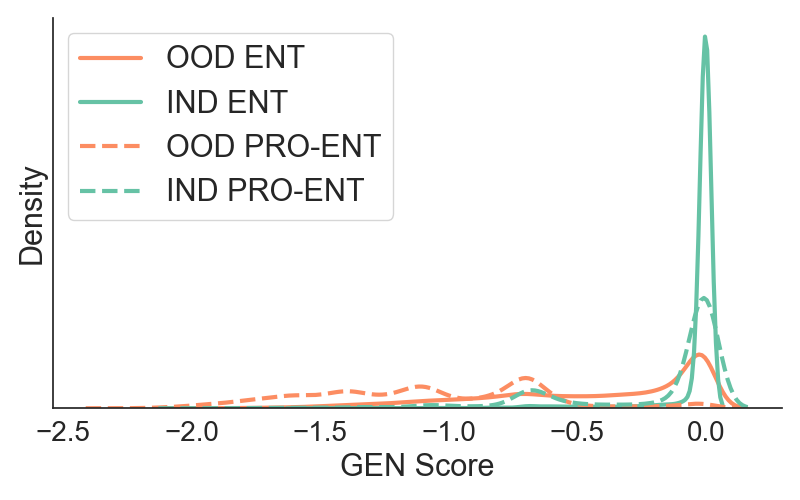} 
        \caption{CIFAR-10: LRR \cite{diffenderfer2021winning}}
    \end{subfigure}
    \hspace{0.2cm}
    \begin{subfigure}[b]{0.4\textwidth}
        \centering
        \includegraphics[width=\textwidth]{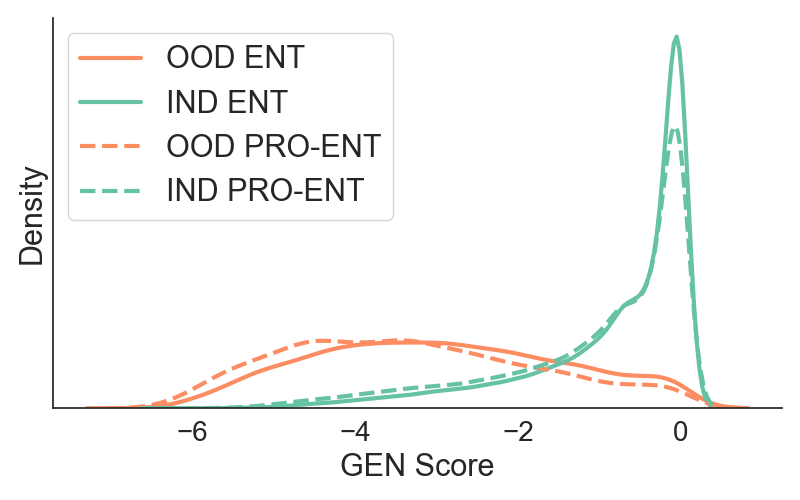}   
        \caption{ImageNet: AugMix\cite{hendrycks*2020augmix}}
    \end{subfigure}
    \caption{PRO method reshapes scores distribution. We select MSP and ENT from a robust CIFAR-10 model and a robust ImageNet model. In above plots, OOD for CIFAR-10 is SVHN \cite{netzer2011reading} and OOD for ImageNet is Texture\cite{cimpoi2014describing}.} 
    \label{fig: distribution change}
\end{figure*}

\begin{table*}[ht]
\centering

\setlength{\tabcolsep}{4pt}  
\resizebox{\textwidth}{!}{%
\begin{tabular}{l cc cc cc cc} \toprule
 & \multicolumn{2}{c}{\textbf{Default Model}} & \multicolumn{2}{c}{\textbf{PixMix}} & \multicolumn{2}{c}{\textbf{AugMix}} & \multicolumn{2}{c}{\textbf{RegMixup}} \\
Method & Near-OOD & Far-OOD & Near-OOD & Far-OOD & Near-OOD & Far-OOD & Near-OOD & Far-OOD \\
\midrule
MSP\cite{hendrycks2016baseline} & 65.68/76.02 & 51.45/85.23 & 65.89/76.86 & 51.11/85.63 & 64.45/77.49 & 46.94/86.67 & 65.33/77.04 & 48.91/86.31 \\
TempScaling\cite{guo2017calibration} & \textbf{64.5}/77.14 & 46.64/87.56 & 64.85/78.02 & 46.82/87.59 & 62.61/78.57 & 42.07/88.75 & 64.26/77.87 & 44.6/87.95 \\
Entropy\cite{hendrycks2016baseline} & 64.96/\underline{77.38} & 47.86/88.01 & 64.69/78.38 & 46.16/88.41 & 63.16/78.78 & 41.81/89.41 & 63.69/\underline{78.24} & 41.9/88.95 \\
GEN\cite{liu2023gen} & 65.32/76.85 & \textbf{35.61}/\underline{89.76} & 66.77/77.78 & \underline{38.13}/\textbf{89.54} & 64.0/78.72 & \underline{32.98}/\underline{90.99} & \underline{63.16}/77.65 & \textbf{34.78}/\textbf{89.65} \\
ODIN\cite{liang2017enhancing} & 72.5/74.75 & 43.96/89.47 & 75.32/74.32 & 61.36/84.45 & 67.71/77.69 & 36.52/\textbf{91.1} & 74.5/75.18 & 49.47/88.79 \\
GradNorm\cite{huang2021importance} & 78.89/72.96 & 47.92/\textbf{90.25} & 85.37/63.42 & 79.68/72.27 & 76.3/72.14 & 60.35/85.01 & 81.96/69.22 & 58.99/85.75 \\
MLS\cite{hendrycks2019scaling} & 67.82/76.46 & 38.22/89.57 & 67.57/78.28 & 41.36/89.21 & 63.36/\textbf{79.14} & 33.47/90.87 & 67.99/77.43 & 38.93/89.25 \\
EBO\cite{liu2020energy} & 68.56/75.89 & 38.39/89.47 & 68.75/77.75 & 41.04/89.3 & 64.17/78.76 & 33.45/90.95 & 69.06/76.48 & 39.97/88.87 \\
RankFeat\cite{song2022rankfeat} & 91.83/50.99 & 87.17/53.93 & 95.36/42.27 & 90.32/42.62 & 93.09/51.18 & 81.14/60.44 & 96.92/41.4 & 94.68/38.39 \\
\rowcolor{blue!15} PRO-MSP & 65.0/76.9 & 52.87/85.54 & \underline{63.36}/77.66 & 47.2/87.15 & 63.49/78.21 & 47.77/87.01 & 64.59/77.58 & 50.87/86.2 \\
\rowcolor{blue!15} PRO-MSP-T & 67.5/76.54 & 37.96/89.61 & 65.21/\underline{78.77} & 40.19/88.92 & 63.33/\underline{79.14} & 33.48/90.86 & 67.59/77.5 & 38.61/\underline{89.29} \\
\rowcolor{blue!15} PRO-ENT & \underline{64.55}/\textbf{77.66} & 46.57/87.85 & \textbf{61.71}/\textbf{78.8} & 41.78/88.49 & \underline{62.41}/79.01 & 39.85/89.24 & 63.52/\textbf{78.26} & 41.73/88.9 \\
\rowcolor{blue!15} PRO-GEN & 65.13/76.62 & \underline{37.21}/89.32 & 64.05/78.2 & \textbf{37.57}/\underline{89.37} & \textbf{62.08}/78.56 & \textbf{32.35}/90.65 & \textbf{62.96}/77.48 & \underline{35.82}/88.99 \\
\bottomrule
\end{tabular}
}
\caption{OOD detection performance on ImageNet. Besides general table format \textbf{best metric}, \underline{second best metric}, and \colorbox{blue!15}{our methods}.
}
\label{table: OOD performance ImageNet}
\end{table*}

\section{Implementation details \& hyperparameter}
\label{sec: supp_implementation}

All experiments presented in this work are conducted on a workstation with four NVIDIA RTX 2080 Ti GPUs and an Intel CPU running at 2.90 GHz. The results can be reproduced by following the experimental platform established by the OpenOOD benchmark~\cite{yang2022openood} \footnote{https://github.com/Jingkang50/OpenOOD}. 

In addition to the overview of \ours provided in Algorithm \ref{alg:g}, we highlight a few additional implementation details. \ours has two hyperparameters which are determined by the evaluation set of test benchmarks. We consider step length $\epsilon$ to be within the range of 0.00005 to 0.01, with perturbations applied to normalized image tensors. As for update steps $K$, we limit it to a maximum of 7 to manage computational overhead. Additional hyperparameters introduced by temperature scaling and GEN have reduced search space for higher efficiency. \tabref{Table: hyperparameters} provides the considered hyperparameter settings for different datasets. It is important to note that the optimal hyperparameters may vary across different pre-trained models.

{\bf\noindent Hyperparameter sensitivity analysis.} 
Please see \figref{figure: hyperparametersensitivity} as an ablation study on hyperparameters. The key takeaway here is to use a small perturbation step $\epsilon$ which stably improves performance as step number $K$ increases.

\begin{table*}[ht]
\centering
\resizebox{\textwidth}{!}{%
\begin{tabular}{p{0.3cm}lp{1.6cm}p{1.6cm}p{1.6cm}p{1.6cm}p{1.6cm}p{1.6cm}p{1.6cm}}
\toprule
\multicolumn{2}{c}{\textbf{IND: CIFAR-10}} & \multicolumn{7}{c}{\textbf{OOD detection performance: FPR@95 $ \downarrow $ / AUROC $ \uparrow $}}  \\
\midrule
{} & Method & Cifar100 & TIN & MNIST & SVHN & Texture & Places365 & Average \\
\midrule
\multirow{14}{*}{\rotatebox{90}{LRR-CARD-Deck}} & MSP\cite{hendrycks2016baseline}  & \textbf{27.76}/91.72 & \textbf{22.86}/92.99 & \underline{19.84}/93.79 & 17.90/94.27 & \underline{16.37}/95.28 & \textbf{23.72}/92.80 & \textbf{21.41}/93.47 \\
& TempScaling\cite{guo2017calibration} & \textbf{27.76}/91.72 & \textbf{22.86}/92.99 & \underline{19.84}/93.80 & 17.90/94.27 & \underline{16.37}/95.29 & \textbf{23.72}/92.80 & \textbf{21.41}/93.48 \\
& Entropy\cite{hendrycks2016baseline} & \textbf{27.76}/91.85 & \textbf{22.86}/93.15 & \underline{19.84}/94.02 & 17.90/94.41 & \underline{16.37}/95.49 & \textbf{23.72}/92.97 & \textbf{21.41}/93.65 \\
& GEN\cite{liu2023gen} & \textbf{27.76}/91.87 & 22.89/93.17 & \underline{19.84}/\underline{94.05} & 17.91/94.42 & \underline{16.37}/\textbf{95.51} & \textbf{23.72}/93.00 & 21.42/93.67 \\
& ODIN\cite{liang2017enhancing} & 32.51/91.03 & 27.02/92.28 & \textbf{13.40}/\textbf{96.29} & 19.68/94.11 & \textbf{15.96}/95.49 & 28.24/91.94 & 22.80/93.52 \\
& MLS\cite{hendrycks2019scaling} & \textbf{27.76}/91.73 & \textbf{22.86}/93.00 & \underline{19.84}/93.81 & 17.90/94.28 & \underline{16.37}/95.30 & \textbf{23.72}/92.81 & \textbf{21.41}/93.49 \\
& EBO\cite{liu2020energy} & \textbf{27.76}/91.87 & 22.87/93.18 & \underline{19.84}/\underline{94.05} & 17.91/94.43 & \underline{16.37}/\textbf{95.51} & \textbf{23.72}/93.00 & \textbf{21.41}/93.67 \\
& ASH\cite{djurisic2023extremely} & 70.56/79.54 & 68.27/81.34 & 50.40/88.65 & 82.06/65.88 & 61.62/85.02 & 57.06/85.46 & 64.99/80.98 \\
& ReAct\cite{sun2021react} & 74.18/74.78 & 71.72/78.00 & 59.22/82.77 & 82.07/59.04 & 73.30/75.50 & 58.23/85.20 & 69.79/75.88 \\
& Scale\cite{xu2024scaling} & 64.16/83.67 & 60.18/85.64 & 49.46/88.89 & 74.01/76.93 & 56.48/87.22 & 50.30/88.56 & 59.10/85.15 \\
& \cellcolor{blue!15}PRO-MSP & \cellcolor{blue!15}29.01/92.09 & \cellcolor{blue!15}23.41/93.61 & \cellcolor{blue!15}28.76/92.27 & \cellcolor{blue!15}\textbf{12.62}/95.71 & \cellcolor{blue!15}20.29/94.87 & \cellcolor{blue!15}24.43/93.53 & \cellcolor{blue!15}23.09/93.68 \\
& \cellcolor{blue!15}PRO-MSP-T & \cellcolor{blue!15}29.64/91.83 & \cellcolor{blue!15}24.30/93.48 & \cellcolor{blue!15}28.86/92.14 & \cellcolor{blue!15}14.64/95.48 & \cellcolor{blue!15}22.72/94.67 & \cellcolor{blue!15}25.84/93.43 & \cellcolor{blue!15}24.33/93.50 \\
& \cellcolor{blue!15}PRO-ENT & \cellcolor{blue!15}28.82/\textbf{92.45} & \cellcolor{blue!15}23.47/\underline{94.08} & \cellcolor{blue!15}28.46/93.13 & \cellcolor{blue!15}14.43/\underline{95.76} & \cellcolor{blue!15}19.93/95.43 & \cellcolor{blue!15}24.24/\underline{94.07} & \cellcolor{blue!15}23.23/\textbf{94.15} \\
& \cellcolor{blue!15}PRO-GEN & \cellcolor{blue!15}29.57/\underline{92.37} & \cellcolor{blue!15}24.08/\textbf{94.09} & \cellcolor{blue!15}28.62/93.16 & \cellcolor{blue!15}\underline{14.06}/\textbf{95.79} & \cellcolor{blue!15}21.62/95.28 & \cellcolor{blue!15}25.50/\textbf{94.11} & \cellcolor{blue!15}23.91/\underline{94.13} \\
\midrule
\multirow{11}{*}{\rotatebox{90}{Binary-CARD-Deck}} & MSP\cite{hendrycks2016baseline}  & \textbf{31.58}/90.25 & \underline{27.87}/91.24 & 17.26/95.32 & 23.33/92.10 & \textbf{21.39}/93.45 & \textbf{29.47}/91.04 & 25.15/92.23 \\
& TempScaling\cite{guo2017calibration} & \textbf{31.58}/90.26 & \underline{27.87}/91.24 & 17.26/95.33 & 23.33/92.10 & \textbf{21.39}/93.46 & \textbf{29.47}/91.05 & 25.15/92.24 \\
& Entropy\cite{hendrycks2016baseline} & \textbf{31.58}/90.50 & \underline{27.87}/91.52 & 17.18/95.89 & 23.33/92.25 & 21.42/93.82 & \textbf{29.47}/91.36 & \underline{25.14}/92.56 \\
& GEN\cite{liu2023gen} & \textbf{31.58}/\underline{90.53} & \underline{27.87}/91.56 & \underline{17.17}/\underline{95.97} & 23.33/92.28 & 21.46/\underline{93.86} & 29.48/91.40 & 25.15/92.60 \\
& ODIN\cite{liang2017enhancing} & 32.37/90.17 & 29.59/90.84 & \textbf{6.86}/\textbf{98.53} & 25.42/91.50 & 23.53/93.54 & 30.83/90.70 & \textbf{24.77}/92.55 \\
& MLS\cite{hendrycks2019scaling} & \textbf{31.58}/90.27 & \underline{27.87}/91.26 & 17.26/95.36 & 23.33/92.11 & \textbf{21.39}/93.48 & \textbf{29.47}/91.07 & 25.15/92.26 \\
& EBO\cite{liu2020energy} & \textbf{31.58}/\textbf{90.54} & \textbf{27.86}/91.57 & \underline{17.17}/\underline{95.97} & 23.31/92.28 & 21.43/\textbf{93.87} & \textbf{29.47}/91.41 & \underline{25.14}/\underline{92.61} \\
& \cellcolor{blue!15}PRO-MSP & \cellcolor{blue!15}35.08/89.67 & \cellcolor{blue!15}30.52/91.29 & \cellcolor{blue!15}31.03/92.45 & \cellcolor{blue!15}18.56/93.35 & \cellcolor{blue!15}27.98/92.50 & \cellcolor{blue!15}31.58/91.38 & \cellcolor{blue!15}29.12/91.77 \\
& \cellcolor{blue!15}PRO-MSP-T & \cellcolor{blue!15}34.01/89.98 & \cellcolor{blue!15}29.77/91.47 & \cellcolor{blue!15}27.72/92.93 & \cellcolor{blue!15}\textbf{17.37}/\underline{93.78} & \cellcolor{blue!15}24.96/92.97 & \cellcolor{blue!15}30.63/91.50 & \cellcolor{blue!15}27.41/92.11 \\
& \cellcolor{blue!15}PRO-ENT & \cellcolor{blue!15}34.84/90.17 & \cellcolor{blue!15}30.20/\underline{91.94} & \cellcolor{blue!15}30.51/93.55 & \cellcolor{blue!15}20.17/93.30 & \cellcolor{blue!15}27.34/93.21 & \cellcolor{blue!15}31.42/\underline{92.07} & \cellcolor{blue!15}29.08/92.37 \\
& \cellcolor{blue!15}PRO-GEN & \cellcolor{blue!15}33.88/90.43 & \cellcolor{blue!15}29.57/\textbf{92.03} & \cellcolor{blue!15}27.14/93.98 & \cellcolor{blue!15}\underline{17.40}/\textbf{93.91} & \cellcolor{blue!15}24.68/93.64 & \cellcolor{blue!15}30.61/\textbf{92.11} & \cellcolor{blue!15}27.21/\textbf{92.68} \\
\midrule
\multirow{13}{*}{\rotatebox{90}{AugMix-ResNeXt}} & MSP\cite{hendrycks2016baseline}  & 29.66/91.03 & 26.22/92.03 & 13.66/96.09 & 27.87/90.84 & \underline{27.79}/91.46 & 25.93/92.12 & 25.19/92.26 \\
& TempScaling\cite{guo2017calibration} & \textbf{29.11}/91.42 & 25.49/92.48 & 12.66/96.75 & 27.91/90.96 & \textbf{27.56}/91.83 & 25.31/92.61 & 24.67/92.67 \\
& Entropy\cite{hendrycks2016baseline} & 29.38/91.51 & 25.90/92.59 & 13.09/97.02 & 27.83/91.04 & 27.80/91.94 & 25.63/92.72 & 24.94/92.80 \\
& GEN\cite{liu2023gen} & 29.43/\textbf{92.13} & \underline{23.51}/\underline{93.63} & 6.43/98.60 & 33.93/89.04 & 29.30/91.90 & 22.76/94.03 & \underline{24.23}/\underline{93.22} \\
& ODIN\cite{liang2017enhancing} & 42.48/89.40 & 39.19/90.18 & \textbf{0.97}/\textbf{99.74} & 77.19/73.85 & 51.96/87.73 & 32.21/91.64 & 40.67/88.76 \\
& MLS\cite{hendrycks2019scaling} & 29.92/92.08 & 23.91/93.59 & 6.56/98.51 & 36.03/88.82 & 29.72/91.80 & 22.73/94.00 & 24.81/93.13 \\
& EBO\cite{liu2020energy} & 29.90/92.04 & 23.97/93.60 & 6.04/98.67 & 36.12/88.52 & 29.98/91.70 & \underline{22.71}/\underline{94.04} & 24.79/93.09 \\
& ASH\cite{djurisic2023extremely} & 35.47/90.95 & 29.67/92.34 & 4.18/99.13 & 54.46/84.75 & 30.01/\underline{92.59} & 22.84/93.80 & 29.44/92.26 \\
& Scale\cite{xu2024scaling} & 34.53/91.59 & 28.28/93.07 & \underline{3.46}/\underline{99.22} & 56.68/85.31 & 28.70/\textbf{93.13} & 23.24/94.02 & 29.15/92.72 \\
& \cellcolor{blue!15}PRO-MSP & \cellcolor{blue!15}30.23/90.70 & \cellcolor{blue!15}26.89/91.95 & \cellcolor{blue!15}19.27/94.53 & \cellcolor{blue!15}\textbf{17.23}/\textbf{93.31} & \cellcolor{blue!15}30.13/90.95 & \cellcolor{blue!15}27.02/92.06 & \cellcolor{blue!15}25.13/92.25 \\
& \cellcolor{blue!15}PRO-MSP-T & \cellcolor{blue!15}29.90/92.08 & \cellcolor{blue!15}23.73/93.61 & \cellcolor{blue!15}6.67/98.47 & \cellcolor{blue!15}34.67/89.32 & \cellcolor{blue!15}29.94/91.81 & \cellcolor{blue!15}22.73/94.01 & \cellcolor{blue!15}24.61/\underline{93.22} \\
& \cellcolor{blue!15}PRO-ENT & \cellcolor{blue!15}30.96/91.37 & \cellcolor{blue!15}27.22/92.99 & \cellcolor{blue!15}19.50/96.21 & \cellcolor{blue!15}\underline{24.76}/\underline{92.10} & \cellcolor{blue!15}32.24/91.37 & \cellcolor{blue!15}27.52/93.18 & \cellcolor{blue!15}27.03/92.87 \\
& \cellcolor{blue!15}PRO-GEN & \cellcolor{blue!15}\underline{29.36}/\underline{92.12} & \cellcolor{blue!15}\textbf{23.46}/\textbf{93.66} & \cellcolor{blue!15}6.61/98.55 & \cellcolor{blue!15}31.97/89.87 & \cellcolor{blue!15}29.53/91.89 & \cellcolor{blue!15}\textbf{22.60}/\textbf{94.05} & \cellcolor{blue!15}\textbf{23.92}/\textbf{93.36} \\
\bottomrule
\end{tabular}%
}
\caption{OOD detection performance on three CIFAR-10 robust models.}
\label{table: supp-cifar10robustmodels}
\end{table*}

\begin{table*}[ht]
\centering
\resizebox{\textwidth}{!}{%
\begin{tabular}{p{0.3cm}lp{1.6cm}p{1.6cm}p{1.6cm}p{1.6cm}p{1.6cm}p{1.6cm}p{1.6cm}}
\toprule
\multicolumn{2}{c}{\textbf{IND: CIFAR-100}} & \multicolumn{7}{c}{\textbf{OOD detection performance: FPR@95 $ \downarrow $ / AUROC $ \uparrow $}}  \\
\midrule
{} & Method & Cifar10 & TIN & MNIST & SVHN & Texture & Places365 & Average \\
\midrule
\multirow{9}{*}{\rotatebox{90}{Default Model}} & MSP\cite{hendrycks2016baseline}  & 58.91/78.47 & 50.70/82.07 & 57.23/76.08 & 59.07/78.42 & 61.88/77.32 & 56.62/79.22 & 57.40/78.60 \\
& TempScaling\cite{guo2017calibration} & \textbf{58.72}/79.02 & 50.26/82.79 & 56.05/77.27 & 57.71/79.79 & \underline{61.56}/78.11 & 56.46/79.80 & 56.79/79.46 \\
& Entropy\cite{hendrycks2016baseline} & \underline{58.83}/79.21 & 50.33/83.08 & 56.73/77.46 & 58.47/80.11 & 61.68/78.32 & 56.43/79.99 & 57.08/79.70 \\
& GEN\cite{liu2023gen} & 58.87/\textbf{79.38} & \underline{49.98}/83.25 & \underline{53.92}/\underline{78.29} & 55.45/81.41 & \textbf{61.23}/\underline{78.74} & \underline{56.25}/\textbf{80.28} & 55.95/\underline{80.23} \\
& ODIN\cite{liang2017enhancing} & 60.64/78.18 & 55.19/81.63 & \textbf{45.94}/\textbf{83.79} & 67.41/74.54 & 62.37/\textbf{79.33} & 59.71/79.45 & 58.54/79.49 \\
& \cellcolor{blue!15}PRO-MSP & \cellcolor{blue!15}60.84/78.75 & \cellcolor{blue!15}51.36/82.82 & \cellcolor{blue!15}62.38/73.31 & \cellcolor{blue!15}48.30/84.35 & \cellcolor{blue!15}66.45/75.91 & \cellcolor{blue!15}57.00/79.47 & \cellcolor{blue!15}57.72/79.10 \\
& \cellcolor{blue!15}PRO-MSP-T & \cellcolor{blue!15}60.18/79.05 & \cellcolor{blue!15}51.13/83.03 & \cellcolor{blue!15}56.13/76.32 & \cellcolor{blue!15}\textbf{44.29}/85.48 & \cellcolor{blue!15}64.43/77.46 & \cellcolor{blue!15}57.24/79.59 & \cellcolor{blue!15}\textbf{55.57}/80.15 \\
& \cellcolor{blue!15}PRO-ENT & \cellcolor{blue!15}60.17/79.09 & \cellcolor{blue!15}50.21/\underline{83.34} & \cellcolor{blue!15}60.69/74.72 & \cellcolor{blue!15}\underline{46.62}/\textbf{86.06} & \cellcolor{blue!15}64.77/77.21 & \cellcolor{blue!15}56.63/79.79 & \cellcolor{blue!15}56.51/80.04 \\
& \cellcolor{blue!15}PRO-GEN & \cellcolor{blue!15}59.83/\underline{79.24} & \cellcolor{blue!15}\textbf{49.62}/\textbf{83.47} & \cellcolor{blue!15}58.07/75.80 & \cellcolor{blue!15}46.81/\underline{85.51} & \cellcolor{blue!15}63.45/77.85 & \cellcolor{blue!15}\textbf{56.18}/\underline{80.07} & \cellcolor{blue!15}\underline{55.66}/\textbf{80.32} \\
\midrule
\multirow{9}{*}{\rotatebox{90}{Robust Model LRR}} & MSP\cite{hendrycks2016baseline}  & \underline{57.19}/78.88 & 50.36/81.49 & 57.46/74.67 & 52.73/78.87 & 62.81/74.53 & 56.52/78.17 & 56.18/77.77 \\
& TempScaling\cite{guo2017calibration} & 57.48/79.91 & 49.02/83.07 & 55.00/77.96 & 52.17/79.79 & 62.31/75.62 & 56.14/79.15 & 55.35/79.25 \\
& Entropy\cite{hendrycks2016baseline} & \textbf{57.03}/79.85 & 49.97/82.97 & 56.83/77.01 & 52.50/79.73 & 62.83/75.31 & 56.43/79.07 & 55.93/78.99 \\
& GEN\cite{liu2023gen} & 58.52/\textbf{80.68} & \textbf{46.41}/\textbf{84.43} & 49.08/80.70 & 47.88/81.82 & \textbf{60.02}/\textbf{77.30} & \underline{54.01}/\underline{80.56} & \underline{52.65}/\underline{80.92} \\
& ODIN\cite{liang2017enhancing} & 68.01/76.36 & 56.21/80.59 & \textbf{20.62}/\textbf{94.96} & 75.73/66.27 & 70.17/73.40 & 65.94/74.85 & 59.45/77.74 \\
& \cellcolor{blue!15}PRO-MSP & \cellcolor{blue!15}59.16/79.14 & \cellcolor{blue!15}51.32/82.73 & \cellcolor{blue!15}69.49/70.39 & \cellcolor{blue!15}51.13/81.35 & \cellcolor{blue!15}69.44/74.01 & \cellcolor{blue!15}55.88/78.97 & \cellcolor{blue!15}59.40/77.77 \\
& \cellcolor{blue!15}PRO-MSP-T & \cellcolor{blue!15}61.94/79.94 & \cellcolor{blue!15}49.18/84.01 & \cellcolor{blue!15}\underline{47.60}/\underline{83.33} & \cellcolor{blue!15}\underline{39.06}/84.15 & \cellcolor{blue!15}64.18/76.02 & \cellcolor{blue!15}57.11/79.09 & \cellcolor{blue!15}53.18/\textbf{81.09} \\
& \cellcolor{blue!15}PRO-ENT & \cellcolor{blue!15}58.64/80.23 & \cellcolor{blue!15}49.98/84.12 & \cellcolor{blue!15}66.20/74.01 & \cellcolor{blue!15}42.50/\underline{84.58} & \cellcolor{blue!15}67.42/75.45 & \cellcolor{blue!15}55.23/80.10 & \cellcolor{blue!15}56.66/79.75 \\
& \cellcolor{blue!15}PRO-GEN & \cellcolor{blue!15}59.57/\underline{80.47} & \cellcolor{blue!15}\underline{46.63}/\underline{84.41} & \cellcolor{blue!15}55.73/76.58 & \cellcolor{blue!15}\textbf{37.30}/\textbf{85.16} & \cellcolor{blue!15}\underline{61.18}/\underline{76.86} & \cellcolor{blue!15}\textbf{52.58}/\textbf{80.84} & \cellcolor{blue!15}\textbf{52.16}/80.72 \\
\midrule
\multirow{9}{*}{\rotatebox{90}{Binary}} & MSP\cite{hendrycks2016baseline}  & \textbf{62.81}/77.05 & 53.92/80.85 & 71.29/67.01 & 51.81/80.96 & 69.90/74.64 & 59.13/78.02 & 61.48/76.42 \\
& TempScaling\cite{guo2017calibration} & 63.20/77.75 & 53.03/81.90 & 69.83/69.10 & 49.90/82.22 & 67.57/76.06 & 57.87/79.04 & 60.23/77.68 \\
& Entropy\cite{hendrycks2016baseline} & \underline{63.08}/78.41 & 52.99/82.81 & 70.48/70.51 & 49.71/83.29 & 68.00/77.04 & 58.02/79.88 & 60.38/78.66 \\
& GEN\cite{liu2023gen} & 64.22/\underline{78.63} & 50.74/\underline{83.40} & 66.94/74.32 & 45.08/84.00 & \underline{64.97}/78.93 & 55.94/80.72 & 57.98/80.00 \\
& ODIN\cite{liang2017enhancing} & 73.06/73.28 & 67.00/77.42 & \textbf{32.00}/\textbf{91.08} & 82.71/64.36 & 67.63/78.46 & 66.87/76.59 & 64.88/76.86 \\
& \cellcolor{blue!15}PRO-MSP & \cellcolor{blue!15}63.39/77.91 & \cellcolor{blue!15}53.43/82.02 & \cellcolor{blue!15}78.78/62.85 & \cellcolor{blue!15}45.18/83.60 & \cellcolor{blue!15}73.17/74.83 & \cellcolor{blue!15}58.00/79.25 & \cellcolor{blue!15}61.99/76.74 \\
& \cellcolor{blue!15}PRO-MSP-T & \cellcolor{blue!15}66.02/78.40 & \cellcolor{blue!15}\underline{50.63}/83.33 & \cellcolor{blue!15}\underline{61.53}/\underline{75.71} & \cellcolor{blue!15}38.79/86.49 & \cellcolor{blue!15}\textbf{60.08}/\textbf{79.94} & \cellcolor{blue!15}\textbf{54.50}/\textbf{80.96} & \cellcolor{blue!15}\textbf{55.26}/\textbf{80.80} \\
& \cellcolor{blue!15}PRO-ENT & \cellcolor{blue!15}63.48/\underline{78.63} & \cellcolor{blue!15}51.63/83.22 & \cellcolor{blue!15}73.51/68.52 & \cellcolor{blue!15}\textbf{34.88}/\textbf{89.64} & \cellcolor{blue!15}69.08/77.66 & \cellcolor{blue!15}56.98/80.31 & \cellcolor{blue!15}58.26/79.66 \\
& \cellcolor{blue!15}PRO-GEN & \cellcolor{blue!15}64.34/\textbf{78.64} & \cellcolor{blue!15}\textbf{50.36}/\textbf{83.50} & \cellcolor{blue!15}68.34/73.47 & \cellcolor{blue!15}\underline{37.96}/\underline{87.06} & \cellcolor{blue!15}65.40/\underline{79.14} & \cellcolor{blue!15}\underline{55.58}/\underline{80.83} & \cellcolor{blue!15}\underline{57.00}/\underline{80.44} \\
\midrule
\multirow{9}{*}{\rotatebox{90}{LRR-CARD-Deck}} & MSP\cite{hendrycks2016baseline}  & \textbf{57.77}/79.48 & 48.12/83.35 & \underline{59.53}/70.34 & 47.17/84.15 & 55.59/79.36 & 54.12/80.54 & 53.72/79.54 \\
& TempScaling\cite{guo2017calibration} & \textbf{57.77}/79.48 & 48.12/83.35 & \underline{59.53}/70.34 & 47.17/84.16 & 55.59/79.37 & 54.12/80.54 & 53.72/79.54 \\
& Entropy\cite{hendrycks2016baseline} & \textbf{57.77}/79.83 & 48.12/83.86 & \underline{59.53}/71.10 & 47.16/84.78 & 55.59/79.60 & 54.11/80.91 & \underline{53.71}/80.01 \\
& GEN\cite{liu2023gen} & 57.78/79.85 & 48.14/83.91 & 59.54/\underline{71.20} & 47.16/84.85 & \underline{55.58}/\underline{79.63} & 54.11/80.94 & 53.72/\underline{80.06} \\
& ODIN\cite{liang2017enhancing} & 58.73/77.26 & 49.40/81.51 & \textbf{47.80}/\textbf{80.06} & 44.83/84.80 & \textbf{55.17}/\textbf{80.57} & 54.31/79.24 & \textbf{51.71}/\textbf{80.57} \\
& \cellcolor{blue!15}PRO-MSP & \cellcolor{blue!15}58.54/79.69 & \cellcolor{blue!15}48.26/84.10 & \cellcolor{blue!15}76.64/64.89 & \cellcolor{blue!15}37.41/86.72 & \cellcolor{blue!15}65.57/77.44 & \cellcolor{blue!15}53.29/81.14 & \cellcolor{blue!15}56.62/79.00 \\
& \cellcolor{blue!15}PRO-MSP-T & \cellcolor{blue!15}58.52/79.69 & \cellcolor{blue!15}48.17/84.10 & \cellcolor{blue!15}76.64/64.90 & \cellcolor{blue!15}37.31/86.77 & \cellcolor{blue!15}65.56/77.45 & \cellcolor{blue!15}53.29/81.15 & \cellcolor{blue!15}56.58/79.01 \\
& \cellcolor{blue!15}PRO-ENT & \cellcolor{blue!15}58.46/\textbf{80.31} & \cellcolor{blue!15}\underline{46.73}/\underline{84.68} & \cellcolor{blue!15}72.91/66.93 & \cellcolor{blue!15}\textbf{32.80}/\textbf{88.13} & \cellcolor{blue!15}62.20/78.08 & \cellcolor{blue!15}\textbf{52.21}/\textbf{81.67} & \cellcolor{blue!15}54.22/79.97 \\
& \cellcolor{blue!15}PRO-GEN & \cellcolor{blue!15}58.78/\underline{80.26} & \cellcolor{blue!15}\textbf{45.98}/\textbf{84.74} & \cellcolor{blue!15}74.50/67.05 & \cellcolor{blue!15}\underline{33.23}/\underline{87.17} & \cellcolor{blue!15}63.48/77.84 & \cellcolor{blue!15}\underline{52.33}/\underline{81.61} & \cellcolor{blue!15}54.72/79.78 \\
\midrule
\multirow{9}{*}{\rotatebox{90}{AugMix-ResNeXt}} & MSP\cite{hendrycks2016baseline}  & \textbf{55.42}/80.12 & 51.20/81.69 & 51.33/80.55 & 53.92/78.48 & 67.49/73.44 & 55.39/79.32 & 55.79/78.93 \\
& TempScaling\cite{guo2017calibration} & 55.52/80.84 & 50.17/82.65 & 49.06/82.76 & 53.20/78.82 & 66.66/73.99 & 55.12/80.03 & 54.96/79.85 \\
& Entropy\cite{hendrycks2016baseline} & \underline{55.51}/81.08 & 50.63/82.95 & 50.24/83.29 & 53.46/78.84 & 67.41/73.95 & 55.17/80.18 & 55.40/80.05 \\
& GEN\cite{liu2023gen} & 57.81/81.02 & 51.04/83.17 & \underline{40.81}/\underline{86.32} & 52.54/78.21 & \textbf{66.01}/\underline{74.34} & 54.60/80.26 & \underline{53.80}/80.55 \\
& ODIN\cite{liang2017enhancing} & 66.33/77.34 & 62.21/78.68 & \textbf{19.21}/\textbf{95.70} & 78.33/66.72 & 74.37/72.57 & 62.93/76.59 & 60.56/77.93 \\
& \cellcolor{blue!15}PRO-MSP & \cellcolor{blue!15}58.48/80.08 & \cellcolor{blue!15}52.24/82.51 & \cellcolor{blue!15}61.60/79.00 & \cellcolor{blue!15}53.24/79.91 & \cellcolor{blue!15}73.07/72.05 & \cellcolor{blue!15}56.01/79.77 & \cellcolor{blue!15}59.11/78.89 \\
& \cellcolor{blue!15}PRO-MSP-T & \cellcolor{blue!15}58.33/80.93 & \cellcolor{blue!15}51.24/83.02 & \cellcolor{blue!15}41.50/85.99 & \cellcolor{blue!15}49.43/79.96 & \cellcolor{blue!15}67.62/74.15 & \cellcolor{blue!15}55.04/80.18 & \cellcolor{blue!15}53.86/80.70 \\
& \cellcolor{blue!15}PRO-ENT & \cellcolor{blue!15}56.56/\underline{81.17} & \cellcolor{blue!15}\underline{50.14}/\textbf{83.48} & \cellcolor{blue!15}52.33/83.06 & \cellcolor{blue!15}\textbf{40.87}/\textbf{85.80} & \cellcolor{blue!15}68.74/74.30 & \cellcolor{blue!15}\underline{54.51}/\textbf{80.75} & \cellcolor{blue!15}53.86/\textbf{81.43} \\
& \cellcolor{blue!15}PRO-GEN & \cellcolor{blue!15}57.58/\textbf{81.20} & \cellcolor{blue!15}\textbf{50.11}/\underline{83.44} & \cellcolor{blue!15}44.07/84.84 & \cellcolor{blue!15}\underline{45.74}/\underline{82.11} & \cellcolor{blue!15}\underline{66.44}/\textbf{74.57} & \cellcolor{blue!15}\textbf{53.59}/\underline{80.63} & \cellcolor{blue!15}\textbf{52.92}/\underline{81.13} \\
\bottomrule
\end{tabular}%
}
\caption{OOD detection performance on CIFAR-100 models across one default model and four robust models.}
\label{Table: supp-cifar100-robustmodels}
\end{table*}

\begin{figure}[h]
  \hspace{-0.80cm}
  \centering
  \includegraphics[width=0.60\textwidth]{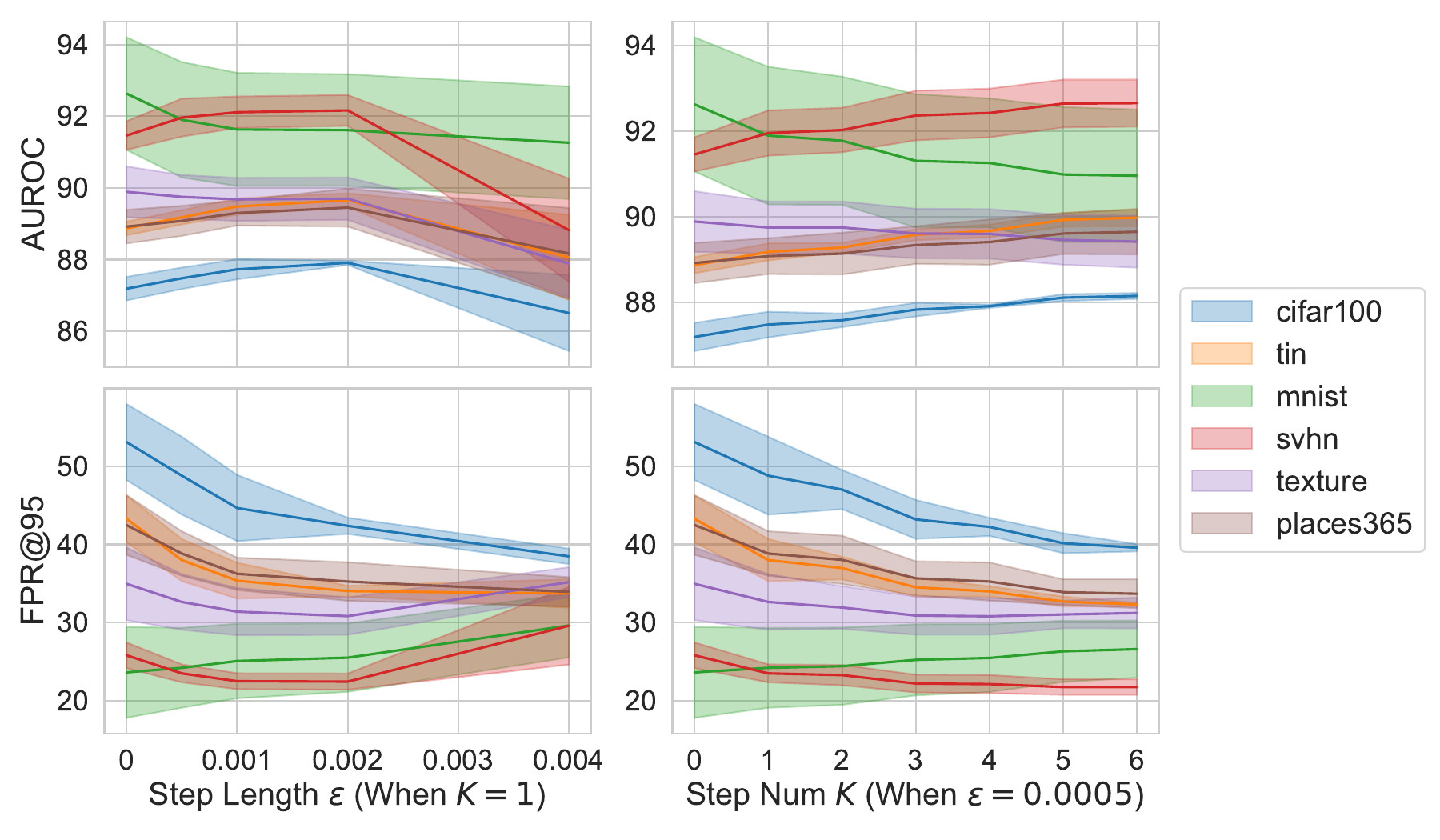}
  \centering
  \vspace{-0.2cm}
  \caption{Hyperparameter sensitivity analysis of PRO-MSP: Statistics are evaluated across three default CIFAR-10 models. obtained from independent training runs without applying robust training.
  }
  \label{figure: hyperparametersensitivity}
  \vspace{-0.7cm}
\end{figure}
\begin{table}[h]
\centering
\begin{tabular}{l|l|l}
\toprule
\textbf{Method} & \textbf{Dataset} & \textbf{Hyperparameters} \\
\midrule
\texttt{PRO-MSP}       & Cifar-10     & \{0.0003, 3\}            \\
$\{\epsilon,K\}$          & Cifar-100    & \{0.001, 5\}             \\
                         & ImageNet     & \{0.0005, 3\}            \\
\midrule
\texttt{PRO-MSP-T} & Cifar-10     & \{0.001, 5, 1000\}       \\
$\{\epsilon,K,T\}$     & Cifar-100    & \{0.001, 5, 10\}         \\
                         & ImageNet     & \{1.0e-05, 1, 10\}       \\

\midrule
\texttt{PRO-ENT}       & Cifar-10     & \{0.001, 1\}             \\
$\{\epsilon,K\}$           & Cifar-100    & \{0.0005, 7\}            \\
                         & ImageNet     & \{5.0e-05, 7\}           \\
\midrule
\texttt{PRO-GEN}        & Cifar-10     & \{0.1, 10, 0.001, 5\}    \\
$\{\gamma,M,\epsilon,K\}$       & Cifar-100    & \{0.01, 100, 0.0008, 5\} \\
                         & ImageNet     & \{0.1, 100, 0.0003, 1\}  \\
\bottomrule
\end{tabular}
\caption{Example hyperparameters of \ours}
\label{Table: hyperparameters}
\end{table}

\end{document}